
\documentclass{article}

\usepackage{microtype}
\usepackage{graphicx}
\usepackage{subfigure}
\usepackage{booktabs} 

\usepackage{hyperref}



\usepackage[accepted]{icml2018}

\usepackage[utf8]{inputenc} 
\usepackage[T1]{fontenc}    
\usepackage{amsthm}
\usepackage{url}            
\usepackage{booktabs}       
\usepackage{amsfonts}       
\usepackage{nicefrac}       
\usepackage{microtype}      
\usepackage{changepage}
\usepackage{multicol}
\usepackage{dsfont}
\usepackage{todonotes}
\usepackage{graphicx}
\usepackage{enumerate}
\usepackage{amsmath,amsfonts,amssymb}
\usepackage{bm}
\usepackage[font={small}]{caption}
\usepackage{mdframed}
\usepackage{wrapfig}
\usepackage{tabularx}
\newtheorem{definition}{Definition}
\newtheorem{corollary}{Corollary}
\newtheorem*{corollary*}{Corollary}
\newlength{\offsetpage}
\setlength{\offsetpage}{1.0cm}
{\end{adjustwidth}}

\newcommand\transp{^\intercal\kern-\scriptspace}

\usepackage{amsthm}
\usepackage{verbatim}

\newtheorem{theorem}{Theorem}


\newcommand{\boldy}{\mathbf{y}}
\newcommand{\boldx}{\mathbf{x}}

\newcommand{\boldz}{\mathbf{z}}
\newcommand{\boldc}{\mathbf{c}}
\newcommand{\boldh}{\mathbf{h}}
\newcommand{\boldW}{\mathbf{W}}

\newcommand{\boldb}{\mathbf{b}}
\newcommand{\bolds}{\mathbf{s}}

\newcommand{\reals}{\mathbb{R}}
\newcommand{\E}{\mathbb{E}}
\newcommand{\Prob}{\mathbb{P}}
\newcommand{\MLP}{\text{MLP}}

\newcommand{\softmax}{\text{softmax}}

\newcommand{\tboldz}{\tilde{\mathbf{z}}}

\newcommand{\irchi}[2]{\raisebox{\depth}{$#1\chi$}}
\newcommand{\squi}{\rightsquigarrow}

\DeclareMathOperator*{\argmax}{arg\,max}
\DeclareRobustCommand{\rchi}{{\mathpalette\irchi\relax}}

{\vspace{-\topsep}\begin{itemize}\itemsep1pt \parskip0pt \parsep0pt \topsep1pt}
{\end{itemize}\vspace{-\topsep}}

{\vspace{-\topsep}\begin{itemize}[leftmargin=*]\itemsep1pt \parskip0pt \parsep0pt \topsep1pt}
{\end{itemize}\vspace{-\topsep}}
\usepackage{enumitem}

\newtheorem{prop}{Proposition}
\usepackage{changepage} 


\icmltitlerunning{Adversarially Regularized Autoencoders}

\begin{document}

\twocolumn[
\icmltitle{Adversarially Regularized Autoencoders}



\icmlsetsymbol{equal}{*}

\begin{icmlauthorlist}
\icmlauthor{Jake (Junbo) Zhao}{equal,nyu,fb}
\icmlauthor{Yoon Kim}{equal,harvard}
\icmlauthor{Kelly Zhang}{nyu}
\icmlauthor{Alexander M. Rush}{harvard}
\icmlauthor{Yann LeCun}{nyu,fb}
\end{icmlauthorlist}

\icmlaffiliation{nyu}{Department of Computer Science, New York University}
\icmlaffiliation{harvard}{School of Engineering and Applied Sciences, Harvard University}
\icmlaffiliation{fb}{Facebook AI Research}

\icmlcorrespondingauthor{Jake Zhao}{jakezhao@cs.nyu.edu}

\icmlkeywords{Machine Learning, ICML}

\vskip 0.3in
]



\printAffiliationsAndNotice{\icmlEqualContribution} 

\begin{abstract}
Deep latent variable models, trained using variational autoencoders or
generative adversarial networks, are now a key technique for representation learning of continuous structures.
However, applying similar methods to discrete structures, such as text sequences or
discretized images, has proven to be more challenging.
In this work, we propose a flexible method for training deep latent
variable models of discrete structures. Our approach is based on the
recently-proposed Wasserstein autoencoder (WAE) which formalizes the
adversarial autoencoder (AAE) as an optimal transport problem.
We first extend this framework to model discrete sequences, and
then further explore different learned priors targeting a controllable representation.
This adversarially regularized autoencoder (ARAE)
allows us to generate natural textual outputs as well as perform manipulations in the latent space to induce change in the output space.
Finally we show that the latent representation can be trained to perform unaligned textual style transfer,
giving improvements both in automatic/human evaluation compared to existing methods.

\end{abstract}

\vspace{-7mm}
\section{Introduction}
\vspace{-2mm}

Recent work on deep latent variable models, such as variational autoencoders \citep{Kingma2014}
and generative adversarial networks \cite{goodfellow2014generative}, has shown significant progress in learning smooth representations of complex, high-dimensional continuous data such as
images. These latent variable representations facilitate the ability to apply smooth transformations in latent space in order to produce complex modifications of generated outputs, while still remaining on the data
manifold.

Unfortunately, learning similar latent variable models of discrete
structures, such as text sequences or discretized images, remains a
challenging problem. Initial work on VAEs for text has shown
that optimization is difficult, as the generative model can easily degenerate
into a unconditional language model \citep{bowman2015generating}. Recent
work on generative adversarial networks (GANs) for text has mostly focused on dealing with the non-differentiable
objective either through policy gradient methods \citep{Che2017, Hjelm2018, Yu2017} or with the
Gumbel-Softmax distribution \citep{Kusner2017}. However, neither approach can
yet produce robust representations directly.

In this work, we extend the adversarial autoencoder (AAE) \cite{Makhzani2015} to discrete sequences/structures. Similar to the AAE, our model learns an encoder from
an input space to an adversarially regularized continuous latent space. However unlike the AAE which utilizes a fixed prior, we instead learn a parameterized prior as a GAN.
Like sequence VAEs, the model does not require
using policy gradients or continuous relaxations. Like GANs, the model provides
flexibility in learning a prior through a parameterized generator.

This adversarially regularized autoencoder (ARAE) can further be formalized under the recently-introduced Wasserstein autoencoder (WAE)
framework~\citep{tolstikhin2017wasserstein}, which also generalizes the adversarial autoencoder.
This framework connects  regularized autoencoders to an optimal transport objective for an
implicit generative model. We extend this class of latent variable models to the case of discrete output,
specifically showing that the autoencoder cross-entropy loss upper-bounds the total variational distance between the model/data distributions.
Under this setup, commonly-used discrete decoders such as RNNs, can be incorporated into the model.
Finally to handle non-trivial sequence examples, we consider several different
(fixed and learned) prior distributions. These include a standard Gaussian prior
used in image models and in the AAE/WAE models, a learned parametric
generator acting as a GAN in latent variable space, and a transfer-based
parametric generator that is trained to ignore targeted attributes of the input.
The last prior can be directly used for unaligned transfer tasks such as
sentiment or style transfer.

Experiments apply ARAE to discretized images and text sequences.
The latent variable model is able to generate varied samples that
can be quantitatively shown to cover the input spaces and to generate consistent image and sentence manipulations by moving around in the latent space via interpolation and offset vector arithmetic.
When the ARAE model is trained with task-specific adversarial regularization, the model improves upon strong results on sentiment transfer reported in \citet{Shen2017} and produces compelling outputs on a topic transfer task using only a single shared space. Code is available at \url{https://github.com/jakezhaojb/ARAE}.

\vspace{-3mm}
\section{Background and Notation}
\vspace{-2mm}
\paragraph{Discrete Autoencoder}
Define $\mathcal{X} = \mathcal{V}^n$ to be a set
of discrete sequences where $\mathcal{V}$ is a vocabulary of symbols. Our discrete autoencoder will consist of two parameterized
functions: a deterministic encoder function
$\text{enc}_{\phi}: \mathcal{X} \mapsto \mathcal{Z}$ with parameters
$\phi$ that maps from input space to code space, and a conditional decoder  $p_{\psi}(\boldx\ |\ \boldz)$  over structures
$\mathcal{X}$ with parameters $\psi$. The parameters are trained based on the cross-entropy reconstruction
loss:
\begin{equation*}
\mathcal{L}_{\text{rec}}(\phi, \psi) = - \log p_{\psi}(\boldx\ |\ \text{enc}_{\phi}(\boldx))
\end{equation*}
The choice of the encoder and decoder parameterization is problem-specific,
for example we use RNNs for sequences. We use the notation,
$\hat{\boldx} = \arg\max_{\boldx} p_{\psi}(\boldx\ |\
\text{enc}_{\phi}(\boldx)) $
for the decoder mode, and call the model distribution $\Prob_{\psi}$.

\vspace{-1mm}
\paragraph{Generative Adversarial Networks}
GANs are a class of parameterized implicit generative models \citep{goodfellow2014generative}.  The
method approximates drawing samples from a true distribution
$\boldz \sim \Prob_{*}$ by instead employing a noise sample
$\bolds$ and a parameterized generator function
$\tboldz = g_{\theta}(\bolds)$ to produce
$\tboldz \sim \Prob_{\boldz}$. Initial work on GANs implicitly 
minimized the Jensen-Shannon divergence between
the distributions. Recent work on Wasserstein GAN (WGAN)
\citep{arjovsky2017wasserstein}, replaces this with
the \emph{Earth-Mover} (Wasserstein-1) distance.

GAN training utilizes two separate models: a \textit{generator}
$g_\theta(\bolds)$ maps a latent vector from some easy-to-sample noise
distribution to a sample from a more complex distribution, and a critic/discriminator $f_w(\boldz)$
aims to distinguish \emph{real} data and \emph{generated} samples from $g_\theta$. Informally, the generator is trained to fool the
critic, and the critic to tell real from generated.
WGAN training uses the following min-max optimization over generator $\theta$ and critic $w$,
\begin{equation*}
\label{equ:wgan}
\min_{\theta} \max_{w \in \mathcal{W}} \E_{\boldz \sim \Prob_{*}}[f_w(\boldz)] - \E_{\tboldz \sim \Prob_{\textbf{z}}}[f_w(\tboldz)],
\end{equation*}
where $f_w:\mathcal{Z} \mapsto \reals$ denotes the critic function,
$\tboldz$ is obtained from the generator,
$\tboldz=g_{\theta}(\bolds)$, and $\Prob_{*}$ and $\Prob_{\boldz}$ are real
and generated distributions. If the critic parameters $w$ are
restricted to an 1-Lipschitz function set $\mathcal{W}$, this term
correspond to minimizing Wasserstein-1 distance
$W(\Prob_{*}, \Prob_{\boldz})$. We use a naive approximation to enforce
this property by weight-clipping, i.e.  $w = [-\epsilon, \epsilon]^d$
\citep{arjovsky2017wasserstein}.\footnote{While we did not experiment with enforcing the Lipschitz constraint via gradient penalty \cite{Gulrajani2017} or
spectral normalization \cite{Miyato2018}, other researchers have found slight improvements by training ARAE with the gradient-penalty version of WGAN (private correspondence).}

\vspace{-3mm}
\section{Adversarially Regularized Autoencoder}
\label{sec:model}
\vspace{-2mm}

ARAE combines a discrete autoencoder with a GAN-regularized latent
representation. The full model is shown in Figure~\ref{fig:diagram}, which produces 
a learned distribution over the discrete space $\Prob_{\psi}$.
Intuitively, this method aims to provide smoother
hidden encoding for discrete sequences with a flexible prior.  In the
next section we show how this simple network can be formally
interpreted as a latent variable model under the Wasserstein
autoencoder framework.

The model consists of a discrete autoencoder
regularized with a prior distribution,
\[   \min_{\phi, \psi}\quad    \mathcal{L}_{\text{rec}}(\phi, \psi) + \lambda^{(1)} W(\Prob_Q, \Prob_{\mathbf{z}}) \]
\noindent
Here $W$ is the Wasserstein distance between $\Prob_Q$, the
distribution from a discrete encoder model
(i.e. $\text{enc}_\phi(\boldx)$ where $\boldx \sim \Prob_\star$), and $\Prob_{\mathbf{z}}$, a prior distribution. As above, the $W$ function is computed with
an embedded critic function which is optimized adversarially to the
generator and encoder.\footnote{Other GANs could be
used for this optimization. Experimentally we
found that WGANs to be more stable than other models.}

The model is trained with coordinate descent across: (1) the encoder
and decoder to minimize reconstruction, (2) the critic function to
approximate the $W$ term, (3) the encoder adversarially to the critic
to minimize $W$:
{\footnotesize
\vspace{-1mm}
\begin{align*}
  1) & \min_{\phi, \psi}  &  \mathcal{L}_{\text{rec}}(\phi, \psi) &= \E_{\boldx \sim \Prob_{\star}}\left[- \log p_\psi(\boldx \,|\, \text{enc}_\phi(\boldx)) \right]\\
  2) & \max_{w \in \mathcal{W}} &  \mathcal{L}_{\text{cri}}(w) &=   \E_{\boldx \sim  \Prob_\star}\left[ f_w(\text{enc}_{\phi}(\boldx))\right] - \E_{\tboldz \sim \Prob_{\mathbf{z}}}\left[f_w(\tboldz)\right] \\
  3) & \min_{\phi} & \mathcal{L}_{\text{enc}}(\phi) &=  \E_{\boldx \sim  \Prob_\star}\left[  f_w(\text{enc}_{\phi}(\boldx))\right]  -  \E_{\tboldz \sim \Prob_{\mathbf{z}}}\left[f_w(\tboldz)\right]
\end{align*}
\vspace{-3mm}
}
The full training algorithm is shown in Algorithm~1.

Empirically we found that the choice of the prior distribution
$\Prob_{\mathbf{z}}$ strongly impacted the performance of the model.
The simplest choice is to use a fixed distribution such as a Gaussian
${\cal N}(0, I)$, which yields a discrete version of the
adversarial autoencoder (AAE). However in practice this choice is
seemingly too constrained and suffers from mode-collapse.\footnote{We note that recent work has successfully utilized AAE for text by instead employing a spherical prior \cite{Cifka2018}.}

Instead we exploit the adversarial setup and use learned prior
parameterized through a generator model. This is analogous to the use of learned priors in VAEs \cite{Chen2017,Tomczak2017}.
Specifically we introduce a
generator model, $g_\theta(\bolds)$ over noise
$\bolds \sim {\cal N}(0, I)$ to act as an implicit prior distribution $\Prob_{\mathbf{z}}$.\footnote{The downside of this approach is that the latent variable
$\mathbf{z}$ is now much less constrained. However we find experimentally that
using a a simple MLP for $g_{\theta}$ significantly regularizes the encoder RNN.} We optimize its parameters $\theta$ as part of
training in Step 3.

\begin{figure}[t]
  \centering
  \begin{mdframed}
  \includegraphics[width=1\linewidth]{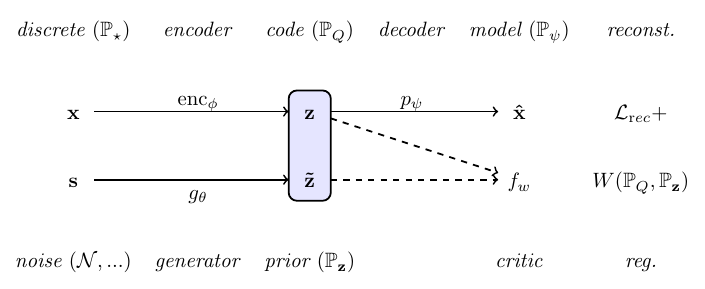}
  \end{mdframed}
  \vspace{-3mm}
  \caption{\label{fig:diagram} ARAE architecture.  A discrete sequence
    $\mathbf{x}$ is encoded and decoded to produce $\mathbf{\hat{x}}$.
    A noise sample $\mathbf{s}$ is passed though a generator
    $g_\theta$ (possibly the identity) to produce a prior. The critic function $f_w$
    is only used at training to enforce regularization $W$.  The model
    produce discrete samples
    $\mathbf{x}$ from noise $\mathbf{s}$. Section 5 relates these samples
    $\mathbf{x} \sim \Prob_{\psi}$ to $\mathbf{x} \sim \Prob_{\star}$.
}
  \vspace{-3mm}
\end{figure}
{\footnotesize
\begin{algorithm}[t]
	\footnotesize
  \caption{ARAE Training}\label{alg:train}
  \begin{algorithmic}
  \FOR{each training iteration}
      \STATE \textbf{\textit{(1) Train the encoder/decoder for reconstruction}} $(\phi, \psi)$ 
      \STATE  Sample $\{\boldx^{(i)}\}_{i=1}^m \sim \Prob_{\star}$ and compute $\boldz^{(i)} = \text{enc}_{\phi}(\boldx^{(i)})$
      \STATE  Backprop  loss, $\mathcal{L}_{\text{rec}} = -\frac{1}{m} \sum_{i=1}^m{\log p_{\psi}(\boldx^{(i)}\ |\ \boldz^{(i)})}$
      \vspace{0.2cm}
      \STATE \textbf{\textit{(2) Train the critic}} $(w)$ 
      \STATE  Sample $\{\boldx^{(i)}\}_{i=1}^m \sim \Prob_{\star}$ and $\{\bolds^{(i)}\}_{i=1}^m \sim \mathcal{N}(0, \mathbf{I})$
      \STATE  Compute $\boldz^{(i)} = \text{enc}_{\phi}(\boldx^{(i)})$ and $\tboldz^{(i)} = g_{\theta}(\boldz^{(i)})$
      \STATE  Backprop loss $-\frac{1}{m} \sum_{i=1}^m f_w(\boldz^{(i)}) +\frac{1}{m} \sum_{i=1}^m f_w(\tboldz^{(i)})$
      \STATE Clip  critic $w$ to $[-\epsilon, \epsilon]^{d}$.
      \vspace{0.2cm}
      \STATE \textbf{\textit{(3) Train the encoder/generator adversarially}} $(\phi, \theta)$ 
      \STATE  Sample $\{\boldx^{(i)}\}_{i=1}^m \sim \Prob_{\star}$ and $\{\bolds^{(i)}\}_{i=1}^m \sim \mathcal{N}(0, \mathbf{I})$
      \STATE  Compute $\boldz^{(i)} = \text{enc}_{\phi}(\boldx^{(i)})$ and $\tboldz^{(i)} = g_{\theta}(\bolds^{(i)})$.
      \STATE  Backprop loss  $ \frac{1}{m} \sum_{i=1}^m f_w(\boldz^{(i)}) - \frac{1}{m} \sum_{i=1}^m f_w(\tboldz^{(i)})$
  \ENDFOR
  \end{algorithmic}
\end{algorithm}
\begin{algorithm}
   \caption{ARAE Transfer Extension}\label{alg:train2}
  \begin{algorithmic}
    \STATE [Each loop additionally:]
    \STATE \textbf{\textit{(2b) Train attribute classifier}} $(u)$ 
    \STATE  Sample $\{\boldx^{(i)}\}_{i=1}^m \sim \Prob_{\star}$, lookup $y^{(i)}$, and compute $\boldz^{(i)} = \text{enc}_{\phi}(\boldx^{(i)})$
    \STATE  Backprop loss $-\frac{1}{m} \sum_{i=1}^m \log p_u(y^{(i)} | \boldz^{(i)})$
    \vspace{0.2cm}
    \STATE \textbf{\textit{(3b) Train the encoder adversarially}} $(\phi)$ 
    \STATE  Sample $\{\boldx^{(i)}\}_{i=1}^m \sim \Prob_{\star}$, lookup $y^{(i)}$, and compute $\boldz^{(i)} = \text{enc}_{\phi}(\boldx^{(i)})$
    \STATE   Backprop loss  $-\frac{1}{m} \sum_{i=1}^m \log p_u(1-y^{(i)}\ |\ \boldz^{(i)})$
  \end{algorithmic}
\end{algorithm}
}
\vspace{-3mm}
\paragraph{Extension: Unaligned Transfer}
\label{sub:codetrans}
Regularization of the latent space makes it more adaptable for direct
continuous optimization that would be difficult over discrete
sequences.  For example, consider the problem of unaligned transfer,
where we want to change an attribute of a discrete input without
aligned examples, e.g. to change the topic or sentiment of a
sentence. Define this attribute as $y$ and redefine the decoder
to be conditional $p_{\psi}(\boldx\ |\ \boldz, y)$.

To adapt ARAE to this setup, we modify the objective to learn to remove
attribute distinctions from the prior (i.e. we want the prior to encode
all the relevant information \emph{except} about $y$).
Following similar techniques
from other domains, notably in images~\citep{Lample2017} and video
modeling~\citep{denton2017unsupervised}, we introduce a latent space attribute
classifier:
\[   \min_{\phi, \psi, \theta}\quad    \mathcal{L}_{\text{rec}}(\phi, \psi) + \lambda^{(1)} W(\Prob_{Q}, \Prob_{\mathbf{z}})  - \lambda^{(2)} {\cal L}_{\text{class}}(\phi, u)\]
where ${\cal L}_{\text{class}}(\phi, u)$ is the loss of a classifier $p_{u}(y\ |\ \boldz)$ from latent variable to labels (in our experiments we always set $\lambda^{(2)} =1$).
This requires two more update steps: (2b) training the classifier, and (3b) adversarially training the encoder to this classifier. This algorithm is shown in Algorithm~\ref{alg:train2}.

\vspace{-3mm}
\section{Theoretical Properties}
\vspace{-1mm}
\label{sec:was}
Standard GANs implicitly minimize a divergence measure (e.g. $f$-divergence or Wasserstein distance) between the true/model distributions. In our case however, we implicitly minimize the divergence between \emph{learned} code distributions, and it is not clear if this training objective is matching the distributions in the original discrete space. \citet{tolstikhin2017wasserstein} recently showed that this style of training is minimizing the Wasserstein distance between the data distribution $\Prob_\star$ and the model distribution $\Prob_\psi$ with latent variables (with density $ p_\psi(\boldx) = \int_{\boldz} p_\psi(\boldx\ |\ \boldz)\ p(\boldz)\ d\boldz$).

In this section we apply the above result to the discrete case and show that the ARAE loss minimizes an upper bound on the \emph{total variation distance}
between $\Prob_\star$ and $\Prob_\psi$.
\begin{definition}[Kantorovich's formulation of optimal transport]{Let $\Prob_\star, \Prob_\psi$ be distributions over $\mathcal{X}$, and further let $c(\boldx,\boldy): \mathcal{X} \times \mathcal{X} \rightarrow \mathbb{R}^{+}$ be a cost function. Then the \emph{optimal transport (OT)} problem is given by
\vspace{-2mm}
\[W_c(\Prob_\star, \Prob_\psi) = \inf_{\Gamma \in \mathcal{P}(\boldx \sim \Prob_\star, \boldy \sim \Prob_\psi)} \mathbb{E}_{\boldx,\boldy \sim \Gamma}[c(\boldx, \boldy)] \]
where $\mathcal{P}(\boldx \sim \Prob_\star, \boldy \sim \Prob_\psi)$ is the set of all joint distributions of $(\boldx,\boldy)$ with marginals $\Prob_\star$ and $\Prob_\psi$.}
\end{definition}
In particular, if $c(\boldx,\boldy) = \Vert \boldx-\boldy \Vert_p^p $ then $W_c(\Prob_\star, \Prob_\psi)^{\frac{1}{p}}$ is the Wasserstein-$p$ distance between $\Prob_\star$ and $\Prob_\psi$.
Now suppose we utilize a latent variable model to fit the data, i.e. $\boldz \sim \Prob_\boldz, \boldx \sim \Prob_\psi(\boldx\ |\ \boldz)$. Then \citet{tolstikhin2017wasserstein} prove the following theorem:
\begin{theorem}{
Let $G_\psi: \mathcal{Z} \rightarrow \mathcal{X}$ be a deterministic function (parameterized by $\psi$) from the latent space $\mathcal{Z}$ to data space $\mathcal{X}$ that induces a dirac distribution $\Prob_\psi(\boldx\ |\ \boldz)$ on $\mathcal{X}$, i.e. $p_\psi(\boldx\ |\ \boldz) = \mathds{1} \{\boldx = G_\psi(\boldz)\}$. Let $Q(\boldz\ |\ \boldx)$ be any conditional distribution on $\mathcal{Z}$ with density $p_Q(\boldz\ |\ \boldx)$. Define its marginal to be $\Prob_Q$, which has density $p_Q(\boldx) = \int_\boldx p_Q(\boldz\ |\ \boldx)\ p_\star(\boldx)d\boldx$. Then,
\[ W_c(\Prob_\star,\Prob_\psi) = \inf_{Q(\boldz\ |\ \boldx) : \Prob_Q = \Prob_\boldz} \mathbb{E}_{\Prob_\star}\mathbb{E}_{Q(\boldz\ |\ \boldx)} [c(\boldx, G_\psi(\boldz))] \] }
\end{theorem}
Theorem 1 essentially says that learning an autoencoder can be interpreted as learning a generative model with latent variables, as long as we ensure that the marginalized encoded space is the same as the prior. This provides theoretical justification for adversarial autoencoders \cite{Makhzani2015}, and \citet{tolstikhin2017wasserstein} used the above to train deep generative models of images by minimizing the Wasserstein-2 distance (i.e. squared loss between real/generated images). We now apply Theorem 1 to discrete autoencoders trained with cross-entropy loss.
\begin{corollary}[Discrete case] Suppose $\boldx \in \mathcal{X}$ where $\mathcal{X}$ is the set of all one-hot vectors of length $n$, and let $f_\psi:\mathcal{Z} \rightarrow \Delta^{n-1}$ be a deterministic function that goes from the latent space $\mathcal{Z}$ to the $n-1$ dimensional simplex $\Delta^{n-1}$.
Further let $G_\psi: \mathcal{Z} \rightarrow \mathcal{X}$ be a deterministic function such that $G_\psi(\boldz)= \argmax_{\mathbf{w} \in \mathcal{X}}\mathbf{w}^\top f_\psi(\boldz)$, and as above let $\Prob_\psi(\boldx\ |\ \boldz)$ be the dirac distribution derived from $G_\psi$ such that $p_\psi(\boldx\ |\ \boldz) = \mathds{1}\{\boldx = G_\psi(\boldz) \}$.  Then the following is an upper bound on $\Vert \Prob_\psi - \Prob_\star \Vert_{\textup{TV}}$, the total variation distance between $\Prob_\star$ and $\Prob_\psi$:
\vspace{-2mm}
\[  \inf_{Q(\boldz\ |\ \boldx) : \Prob_Q = \Prob_\boldz} \mathbb{E}_{\Prob_\star}\mathbb{E}_{Q(\boldz\ |\ \boldx)} \Big[-\frac{2}{\log 2} \log \boldx^\top f_\psi(\boldz)\Big] \]
\end{corollary}
The proof is in Appendix~\ref{sec:proof}. For natural language we have $n = |\mathcal{V}|^m$ and therefore $\mathcal{X}$ is the set of sentences of length $m$, where $m$ is the maximum sentence length (shorter sentences are padded if necessary). Then the total variational (TV) distance is given by
\vspace{-2mm}
\[\Vert \Prob_\psi - \Prob_\star \Vert_{\text{TV}} = \frac{1}{2} \sum_{\boldx \in \mathcal{V}^m} |p_\psi(\boldx) - p_{\star}(\boldx)|\]
This is an interesting alternative to the usual maximum likelihood approach which instead minimizes $\text{KL}(\Prob_\star, \Prob_\psi)$.\footnote{The relationship between KL-divergence and total variation distance is also given by Pinsker's inquality, which states that $2\Vert \Prob_{\psi} - \Prob_{\star} \Vert_{\text{TV}}^2 \le \text{KL}(\Prob_\star, \Prob_\psi)$.} It is also clear that $-\log \boldx^\top f_\psi(\boldz) = -\log p_\psi(\boldx\ |\ \boldz)$, the standard autoencoder cross-entropy loss at the sentence level with $f_\psi$ as the decoder. As the above objective is hard to minimize directly, we follow \citet{tolstikhin2017wasserstein} and consider an easier objective by (i) restricting $Q(\boldz\ |\ \boldx)$ to a family of distributions induced by a deterministic encoder parameterized by $\phi$, and (ii) using a Langrangian relaxation of the constraint $\Prob_Q = \Prob_\boldz$. In particular, letting $Q(\boldz\ |\ \boldx) = \mathds{1}\{ \boldz = \text{enc}_\phi(\boldx) \}$ be the dirac distribution induced by a deterministic encoder (with associated marginal $\Prob_\phi$), the objective is given by
\vspace{-2mm}
\[ \min_{\phi, \psi} \E_{\Prob_\star}[-\log p_\psi(\boldx\ |\ \text{enc}_\phi(\boldz))]  + \lambda W(\Prob_\phi, \Prob_\boldz)\]
Note that our minimizing the Wasserstein distance in the \emph{latent} space $W(\Prob_\phi, \Prob_\boldz)$ is independent from the Wassertein distance minimization in the \emph{output} space in WAEs.
Finally, instead of using a fixed prior (which led to mode-collapse in our experiments) we parameterize $\Prob_\boldz$ implicitly by transforming a simple random variable with a generator (i.e. $\bolds \sim \mathcal{N}(0, I), \boldz = g_\theta(\bolds))$. This recovers the ARAE objective from the previous section.

We conclude this section by noting that while the theoretical formalization of the AAE as a latent variable model was an important step, in practice there are many approximations made to the actual optimal transport objective. Meaningfully quantifying (and reducing) such approximation gaps remains an avenue for future work. 
\vspace{-4mm}
\section{Methods and Architectures}
\vspace{-1mm}

We experiment with ARAE on three setups: (1) a small
model using discretized images trained on the binarized version of
MNIST, (2) a model for text sequences trained  on the Stanford
Natural Language Inference (SNLI) corpus \citep{Bowman2015}, and (3) a
model trained for text transfer trained on the Yelp/Yahoo datasets
for unaligned sentiment/topic transfer.
For experiments using a learned prior, the generator
architecture uses a low dimensional $\bolds$ with a Gaussian prior
$\bolds \sim \mathcal{N}(0, \mathbf{I})$, and maps it to
$\boldz$ using an MLP $g_\theta$. The critic $f_w$ is also
parameterized as an MLP.

The \textit{image} model encodes/decodes
binarized images. Here $\mathcal{X} = \{0,1\}^{n}$ where $n$ is the
image size.  The encoder used is an MLP mapping
from $\{0,1\}^n \mapsto \reals^m$,
$\text{enc}_\phi(\boldx) = \MLP(\boldx; \phi) = \boldz$. The decoder
predicts each pixel in $\boldx$ with as a parameterized logistic
regression,
$p_\psi(\boldx\ |\ \boldz) = \prod_{j=1}^n \sigma(\boldh)^{x_j} (1-
\sigma(\boldh))^{1-x_j}$ where $\boldh = \MLP(\boldz; \psi)$.

The \textit{text} model uses a recurrent neural network (RNN) for both the encoder and
decoder. Here  $\mathcal{X} = \mathcal{V}^{n}$ where $n$ is the sentence length
and $\mathcal{V}$ is the vocabulary of the underlying language.
We define $\text{enc}_{\phi}(\boldx) = \boldz$ to be the last hidden state of an encoder RNN.
For decoding we feed $\boldz$ as an additional input to the decoder RNN at each time step, and
calculate the distribution over $\mathcal{V}$ at each time step via softmax,
$p_{\psi}(\boldx \ | \ \boldz) = \prod_{j=1}^n \softmax(\boldW
h_j + \boldb)_{x_j} $
where $\boldW$ and $\boldb$ are parameters (part of $\psi$) and $h_j$ is the decoder RNN hidden state.
To be consistent with Corollary 1 we need to find the highest-scoring sequence $\hat{\boldx}$ under this distribution during decoding, which is intractable in general. Instead we approximate this with greedy search.
The \textit{text transfer} model uses the same architecture as the text model but
extends it with a classifier $p_u(y\ |\ \boldz)$ which is modeled using
an MLP and trained to minimize cross-entropy.

We further compare our approach with a standard autoencoder (AE) and the
cross-aligned autoencoder \citep{Shen2017} for transfer.  In
both our ARAE and standard AE experiments, the  encoder output is
normalized to lie on the unit sphere, and the generator output is
bounded to lie in $(-1,1)^n$ by the $\tanh$ function at output layer.


Note, learning deep latent variable models for text sequences has been a
significantly more challenging empirical problem than for images.
Standard models such as VAEs suffer from optimization
issues that have been widely documented.  We performed 
experiments with recurrent VAE, introduced by
\cite{bowman2015generating}, as well as the adversarial autoencoder
(AAE) \cite{Makhzani2015}, both with Gaussian priors. We found that neither model was able to learn meaningful latent
representations---the VAE simply ignored the latent code and the AAE
experienced mode-collapse and repeatedly generated the same samples.\footnote{However there have been some recent successes training such models, as noted in the related works section}
Appendix \ref{app:exp} includes detailed descriptions of the
hyperparameters, model architecture, and training regimes.

\vspace{-3mm}
\section{Experiments}
\vspace{-1mm}

\subsection{Distributional Coverage}
\vspace{-2mm}

\begin{figure}
  \centering
\includegraphics[width=0.45\linewidth]{./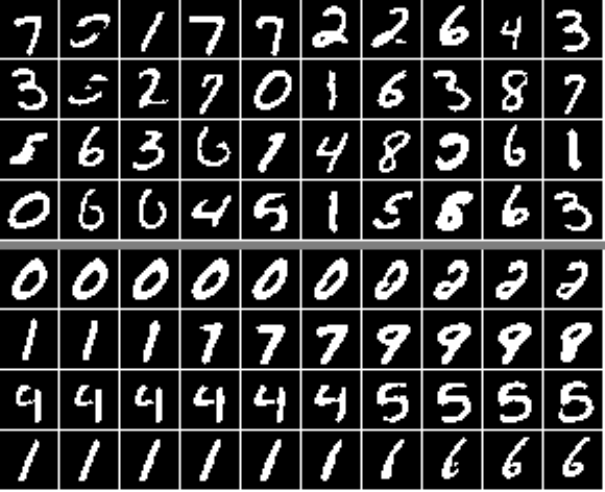}
  \vspace{-2mm}
  \caption{ Image samples. The top block shows output generation of the decoder for random noise samples; the bottom block shows sample interpolation results.}
      \vspace{-4mm}
  \label{fig:mnist}

\end{figure}

\begin{table}
\small
  \centering

  \begin{tabular}{lrr}
    \toprule
    Data  & Reverse PPL & Forward PPL\\
    \midrule
    Real data &  27.4 & - \\
    LM samples  & 90.6  & 18.8\\
    AE samples  & 97.3 & 87.8 \\
    ARAE samples & 82.2 & 44.3 \\
    \bottomrule
  \end{tabular}
  \vspace{-1mm}
  \caption{\label{tab:revppl}  Reverse PPL: Perplexity
    of language models trained on the synthetic
    samples from a ARAE/AE/LM, and evaluated on real data. Forward PPL: Perplexity of a language model trained on real data and evaluated on synthetic samples.}
        \vspace{-5mm}
\end{table}

Section~\ref{sec:was} argues that $\Prob_\psi$
is trained to approximate the true data distribution over discrete sequences $\Prob_\star$.
While it is difficult to test for this property directly (as is the case with
most GAN models), we can take samples from model to test the fidelity and coverage
of the data space. Figure~\ref{fig:mnist} shows a set of samples from
discretized MNIST and Appendix~\ref{app:generation} shows a set of generations from
the text ARAE.

A common quantitative measure of sample quality for generative models is to evaluate a
strong surrogate model trained on its generated samples.  While there
are pitfalls of this style of evaluation methods \citep{Theis2016}, it has provided a
starting point for image generation models. Here we use a similar
method for text generation, which we call \emph{reverse perplexity}. We generate 100k samples from each of the models, train an
  RNN language model on generated samples and evaluate perplexity on held-out data.\footnote{We also found this metric to be helpful for early-stopping.}   While similar metrics for images
  (e.g. Parzen windows)  have been shown to be problematic, we  argue that this is less of an issue
  for text as RNN language models achieve state-of-the-art perplexities on text datasets.
 We also calculate the usual ``forward'' perplexity by training an RNN language model on real data and testing on generated data. This measures the fluency of the generated samples, but cannot detect mode-collapse, a common issue in training GANs \cite{Arjovsky2017,Hu2018}.

Table~\ref{tab:revppl} shows these metrics for (i) ARAE, (ii) an
autoencoder (AE),\footnote{To ``sample'' from an AE we fit a multivariate
  Gaussian to the code space after training and generate code vectors
  from this Gaussian to decode back into sentence space.} (iii) an RNN
language model (LM), and (iv) the real training set. We
further find that with a fixed prior, the reverse perplexity of an AAE-style text model
\citep{Makhzani2015} was quite high (980) due to mode-collapse. All
models are of the same size to allow for fair comparison. Training
directly on real data (understandably) outperforms training on
generated data by a large margin. Surprisingly however,
training on ARAE samples outperforms training on LM/AE samples in terms of reverse perplexity.
\begin{table}[t]
  \tiny
  \centering
  \begin{tabular}{lp{0.78\linewidth}}    \toprule
Positive  & great indoor mall .  \\
$ \Rightarrow $ ARAE & no smoking mall .  \\
$ \Rightarrow $ Cross-AE & terrible outdoor urine . \\\\

Positive  & it has a great atmosphere , with wonderful service . \\
$ \Rightarrow $ ARAE & it has no taste , with a complete jerk .  \\
$ \Rightarrow $ Cross-AE & it has a great horrible food and run out service . \\\\

Positive  & we came on the recommendation of a bell boy and the food was amazing . \\
$ \Rightarrow $ ARAE & we came on the recommendation and the food was a joke .  \\
$ \Rightarrow $ Cross-AE & we went on the car of the time and the chicken was awful . \\
    \midrule
Negative  &   hell no ! \\
$ \Rightarrow $ ARAE & hell great ! \\
$ \Rightarrow $ Cross-AE & incredible pork ! \\\\

Negative  & small , smokey , dark and rude management . \\
$ \Rightarrow $ ARAE &  small , intimate , and cozy friendly staff . \\
$ \Rightarrow $ Cross-AE & great , , , chips and wine . \\\\

Negative  &  the people who ordered off the menu did n't seem to do much better . \\
$ \Rightarrow $ ARAE & the people who work there are super friendly and the menu is good . \\
$ \Rightarrow $ Cross-AE & the place , one of the office is always worth you do a business . \\
    \bottomrule
  \end{tabular}
 \vspace{-2mm}
  \caption{\label{tab:sent_trans} Sentiment transfer results, where we transfer from positive to negative sentiment (Top) and negative to positive sentiment (Bottom). Original sentence and transferred output (from ARAE and the Cross-Aligned AE (from \citet{Shen2017}) of 6 randomly-drawn examples. }
      \vspace{-8mm}
\end{table}

\begin{table}[t]
  \small
  \centering
  \begin{tabular}{lcccc}
    \toprule
    & \multicolumn{4}{c}{Automatic Evaluation} \\
    Model  &  Transfer & BLEU & Forward &  Reverse \\
    \midrule
     Cross-Aligned AE & 77.1\%  & 17.75 & 65.9 & 124.2  \\
     AE   & 59.3\% & 37.28 & 31.9 & 68.9 \\
     ARAE, $\lambda^{(1)}_a$ & 73.4\% & 31.15 & 29.7 & 70.1 \\
     ARAE, $\lambda^{(1)}_b$ & 81.8\% & 20.18 & 27.7 & 77.0 \\
    \bottomrule
  \end{tabular}
  \vspace{0.15cm}

  \begin{tabular}{lccc}
    \toprule
    & \multicolumn{3}{c}{Human Evaluation} \\

    Model &  Transfer & Similarity  & Naturalness\\
    \midrule
     Cross-Aligned AE & 57\%  & 3.8 & 2.7 \\
     ARAE, $\lambda^{(1)}_b$ & 74\% & 3.7 & 3.8 \\
    \bottomrule
  \end{tabular}
    \vspace{-1mm}
  \caption{\label{tab:sentiment} Sentiment transfer. (Top) Automatic metrics (Transfer/BLEU/Forward PPL/Reverse PPL), (Bottom) Human evaluation metrics (Transfer/Similarity/Naturalness). Cross-Aligned AE is from \citet{Shen2017}}
      \vspace{-6mm}
\end{table}

\begin{table}[h]
  \tiny
  \centering
  \begin{tabular}{lp{0.75\linewidth}}
    \toprule
Science   & what is an event horizon with regards to black holes ?  \\
$\Rightarrow$ Music & what is your favorite sitcom with adam sandler ? \\
$\Rightarrow$ Politics & what is an event with black people ? \\
\\
Science  & take 1ml of hcl ( concentrated ) and dilute it to 50ml . \\
$\Rightarrow$ Music & take em to you and shout it to me  \\
$\Rightarrow$ Politics & take bribes to islam and it will be punished .  \\
\\
Science  & just multiply the numerator of one fraction by that of the other . \\
$\Rightarrow$ Music & just multiply the fraction of the other one that \&apos;s just like it .  \\
$\Rightarrow$ Politics & just multiply the same fraction of other countries . \\
    \midrule
Music  & do you know a website that you can find people who want to join bands ? \\
$\Rightarrow$ Science & do you know a website that can help me with science ? \\
$\Rightarrow$ Politics & do you think that you can find a person who is in prison ? \\
\\
Music  & all three are fabulous artists , with just incredible talent ! ! \\
$\Rightarrow$ Science & all three are genetically bonded with water , but just as many substances , are capable of producing a special case . \\
$\Rightarrow$ Politics & all three are competing with the government , just as far as i can . \\
\\
Music  & but there are so many more i can \&apos;t think of !  \\
$\Rightarrow$ Science & but there are so many more of the number of questions . \\
$\Rightarrow$ Politics & but there are so many more of the can i think of today . \\
    \midrule
 Politics   & republicans : would you vote for a cheney / satan ticket in 2008 ? \\
$\Rightarrow$ Science & guys : how would you solve this question ? \\
$\Rightarrow$ Music & guys : would you rather be a good movie ? \\
\\
Politics   & 4 years of an idiot in office + electing the idiot again = ? \\
$\Rightarrow$ Science & 4 years of an idiot in the office of science ? \\
$\Rightarrow$ Music & 4 ) <unk> in an idiot , the idiot is the best of the two points ever ! \\
\\
Politics   & anyone who doesnt have a billion dollars for all the publicity cant win . \\
$\Rightarrow$ Science & anyone who doesnt have a decent chance is the same for all the other . \\
$\Rightarrow$ Music & anyone who doesnt have a lot of the show for the publicity . \\
    \bottomrule
  \end{tabular}

  \caption{\label{tab:yahoo_trans} Topic Transfer. Random samples from the Yahoo dataset. Note the first row is from ARAE trained on titles while the following ones are from replies.}
      \vspace{-4mm}
\end{table}

\vspace{-2mm}
\subsection{Unaligned Text Style Transfer}
\vspace{-1mm}
Next we evaluate the model in the context of a learned adversarial
prior, as described in Section~\ref{sub:codetrans}. We
experiment with two unaligned text transfer tasks: (i) transfer of sentiment on the Yelp corpus, and (ii) topic on the
Yahoo corpus \citep{zhang2015character}.  For sentiment we follow the
setup of \citet{Shen2017} and split the Yelp corpus into two sets of
unaligned positive and negative reviews. We train ARAE  with two separate decoder RNNs, one for positive,
$p(\boldx\ |\ \boldz, y=1)$, and one for negative sentiment
$p(\boldx\ |\ \boldz, y=0)$, and incorporate adversarial training of
the encoder to remove sentiment information from the prior. Transfer
corresponds to encoding sentences of one class and decoding,
greedily, with the opposite decoder.
Experiments compare against the cross-aligned AE of \citet{Shen2017}
and also an AE trained without the adversarial regularization. For
ARAE, we experimented with different $\lambda^{(1)}$ weighting on the
adversarial loss (see section 4) with
$\lambda_a^{(1)} = 1, \lambda_b^{(1)} = 10$. Both use
$\lambda^{(2)} = 1$. Empirically the adversarial regularization
enhances transfer and perplexity, but tends to make the transferred
text less similar to the original, compared to the AE.  Randomly
selected example sentences are shown in Table~\ref{tab:sent_trans}
and additional outputs are available in Appendix~\ref{app:sheet}.

Table \ref{tab:sentiment} (top) shows quantitative evaluation. We use four
automatic metrics: (i) Transfer: how successful the model
is at altering sentiment based on an automatic classifier (we use the
\texttt{fastText} library \citep{joulin2016bag});  (ii) BLEU: the
consistency between the transferred text and the original; (iii)
Forward PPL: the fluency of the generated text; (iv) Reverse PPL: measuring the extent to which the generations are
representative of the underlying data distribution. Both
perplexity numbers are obtained by training an RNN language model.
Table~\ref{tab:sentiment}~(bottom) shows human evaluations on the
cross-aligned AE and our best ARAE model. We randomly select 1000
sentences (500/500 positive/negative), obtain the corresponding
transfers from both models, and ask crowdworkers to evaluate the
sentiment (Positive/Neutral/Negative) and naturalness (1-5, 5 being
most natural) of the transferred sentences. We create a separate task
in which we show the original and the transferred sentences, and ask
them to evaluate the similarity based on sentence structure (1-5, 5
being most similar). We explicitly requested that the reader disregard
sentiment in similarity assessment.

The same method can be applied to other style transfer tasks, for
instance the more challenging Yahoo QA data
\citep{zhang2015character}.  For Yahoo we chose 3 relatively distinct
topic classes for transfer: \textsc{Science \& Math}, \textsc{Entertainment \& Music}, and
\textsc{Politics \& Government}.  As the dataset contains both questions and
answers, we separated our experiments into titles (questions) and
replies (answers). Randomly-selected generations are shown in Table
\ref{tab:yahoo_trans}. See Appendix \ref{app:sheet} for additional generation examples.

\begin{table}
\small
  \centering
 \begin{tabular}{lccc}
    \toprule
    Model  &  Medium & Small & Tiny \\
    \midrule
    Supervised Encoder  & 65.9\%  & 62.5\% & 57.9\% \\
    Semi-Supervised AE   & 68.5\% & 64.6\% & 59.9\%\\
    Semi-Supervised ARAE & 70.9\% & 66.8\% & 62.5\% \\
    \bottomrule
  \end{tabular}
  \caption{  \label{tab:semi} Semi-Supervised accuracy on the natural language inference
    (SNLI) test set, respectively using 22.2\% (medium), 10.8\% (small), 5.25\% (tiny) of the supervised labels of the full SNLI training set (rest used for unlabeled AE training). }
    \vspace{-6mm}
\end{table}

\begin{figure*}[h]
  \centering
\minipage{0.3\linewidth}
\includegraphics[scale=0.22]{./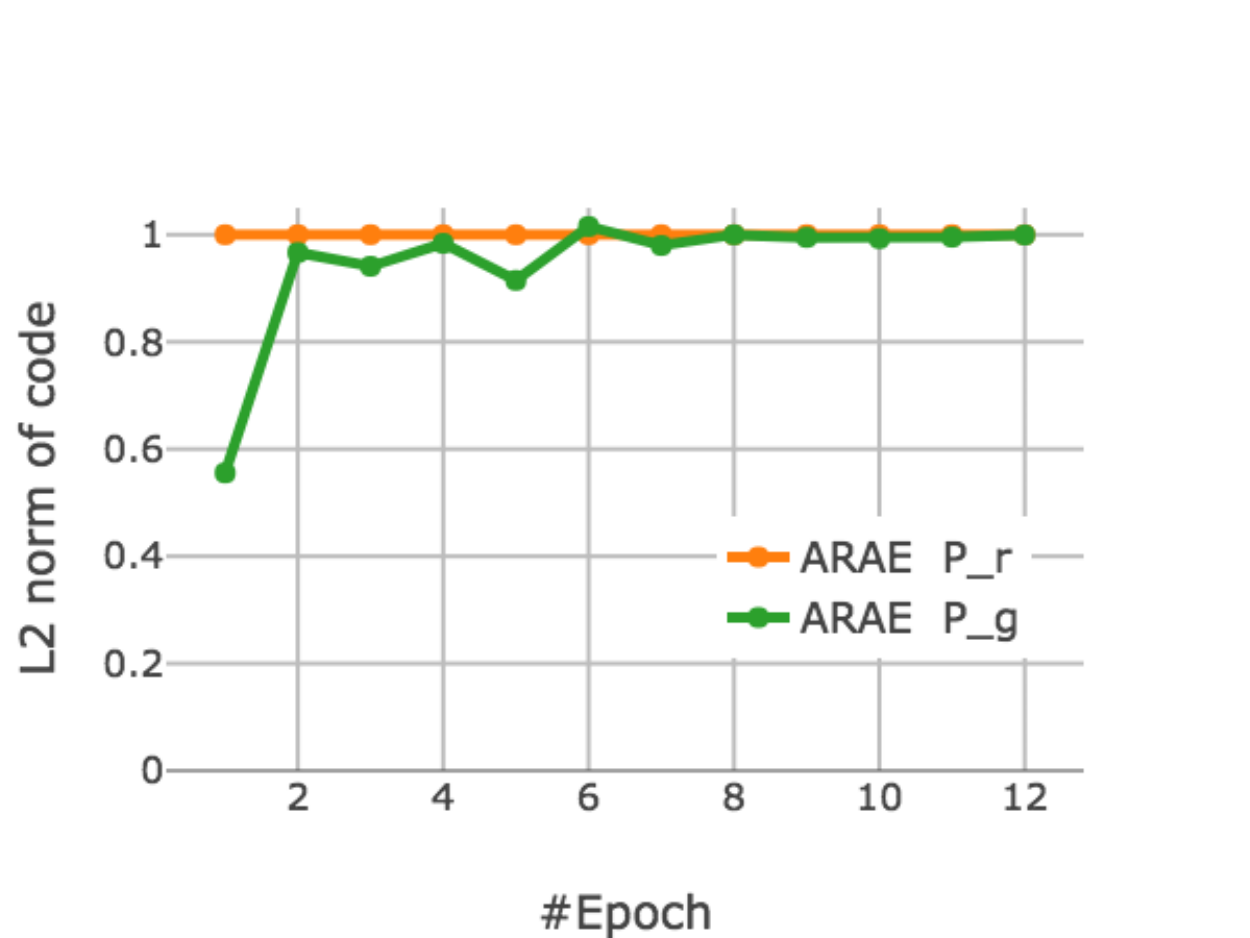}
\endminipage \hspace{1pt}
\minipage{0.3\linewidth}
\includegraphics[scale=0.22]{./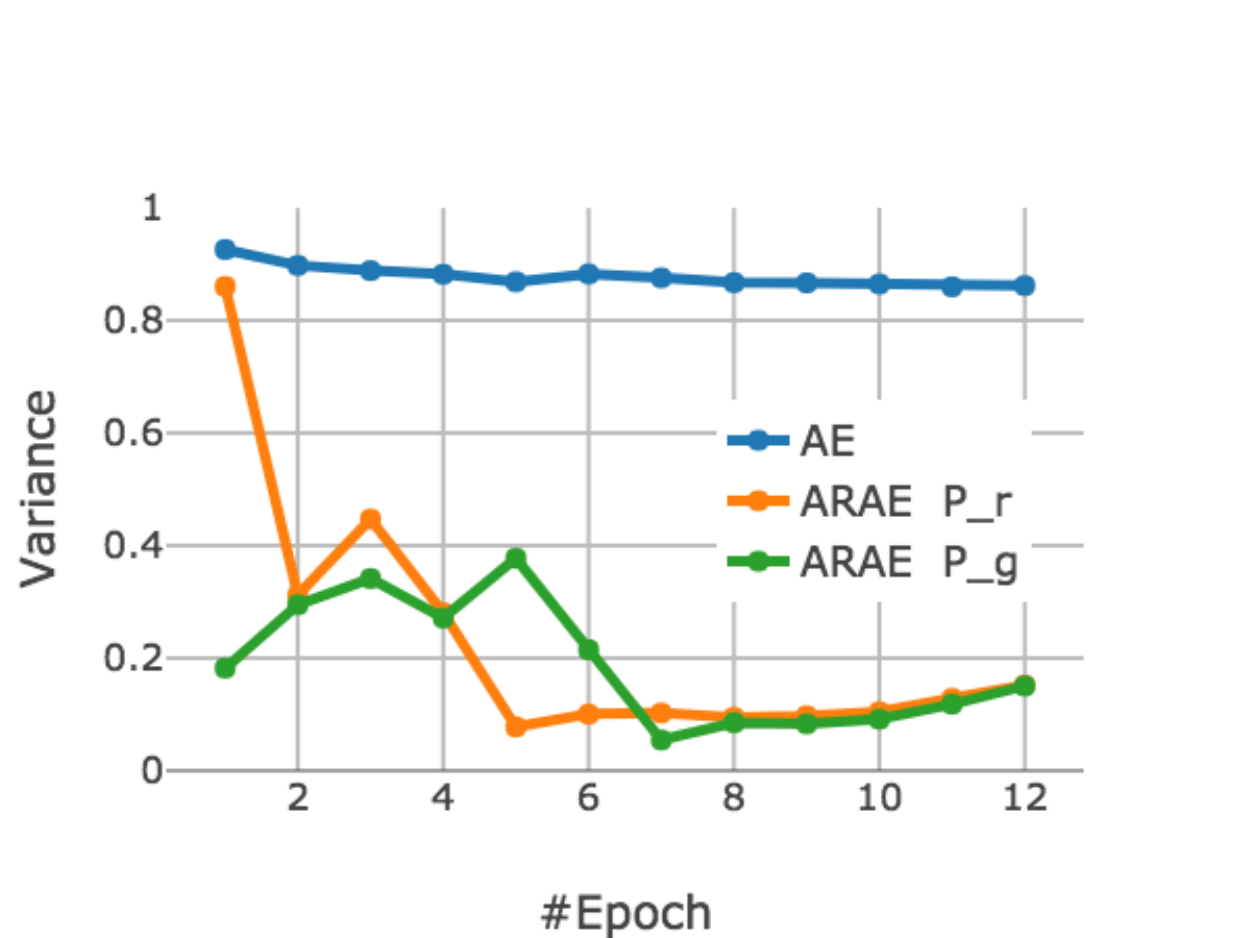}
\endminipage \hspace{1pt}
\minipage{0.3\linewidth}
\includegraphics[scale=0.22]{./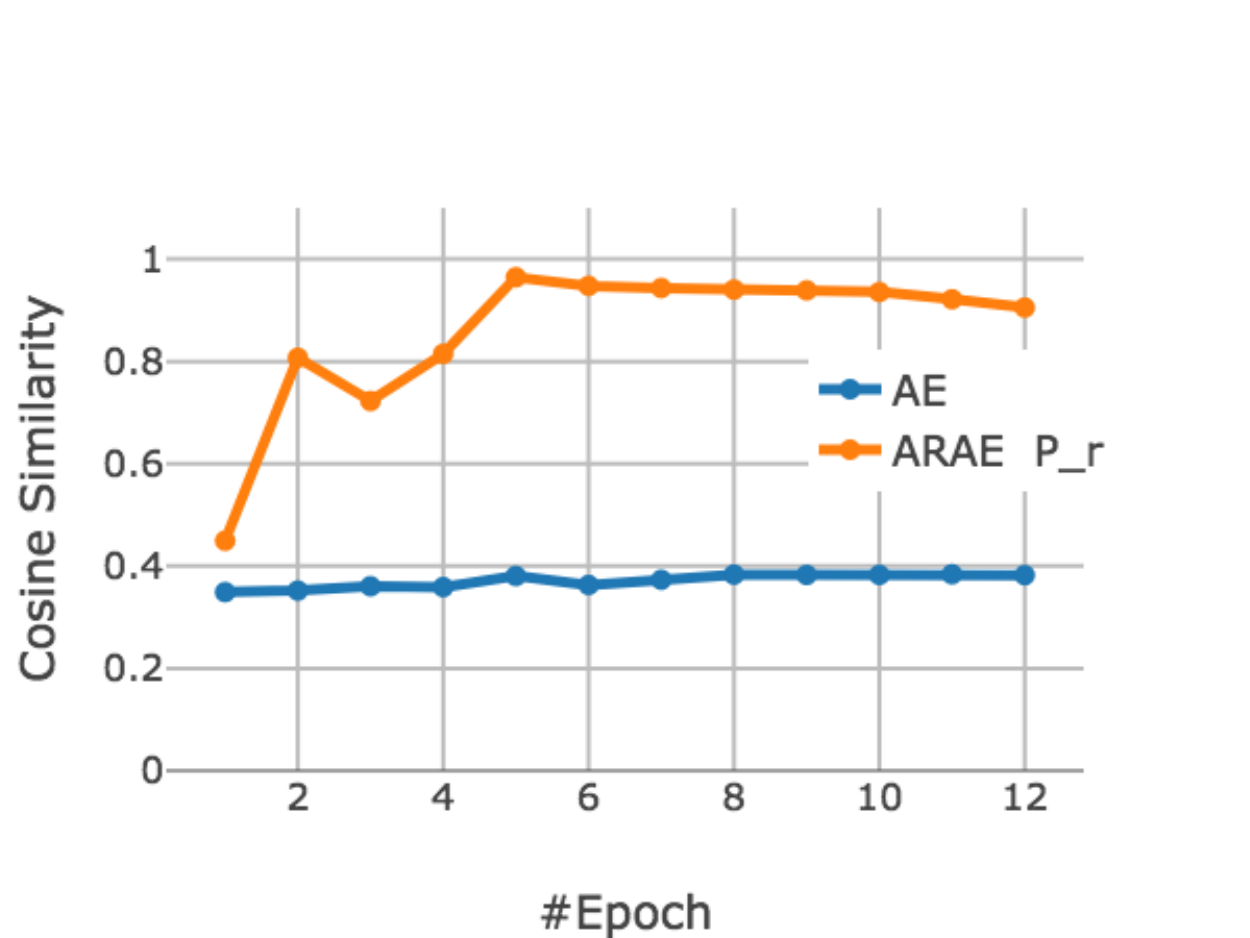}
\endminipage
  \caption{\small
    Left: $\ell_2$ norm of encoder output $\boldz$ and generator output $\tboldz$ during ARAE training. ($\boldz$
    is normalized, whereas the generator learns to match).
	Middle: Sum of the dimension-wise variances of $\boldz$ and generator codes $\tboldz$ as well as reference AE.
    Right: Average cosine similarity of nearby sentences (by word edit-distance) for the ARAE and AE during training.
    \label{fig:variance}}
        \vspace{-3mm}
\end{figure*}

\vspace{-3mm}
\subsection{Semi-Supervised Training}
\vspace{-1mm}
Latent variable models can also provide an easy method for
semi-supervised training. We use a natural language inference task to
compare semi-supervised ARAE with other training methods. Results are shown in
Table~\ref{tab:semi}.  The full SNLI training set contains 543k sentence pairs, and we use supervised
sets of 120k (Medium), 59k (Small), and 28k (Tiny) and use the rest of the training set
for unlabeled training.  As a baseline we use an AE trained on the additional data,
similar to the setting explored in \citet{dai2015semi}. For ARAE we use
the subset of unsupervised data of length $<15$ (i.e. ARAE is trained on less data than AE for
unsupervised training).
The results are shown in Table~\ref{tab:semi}.
Training on unlabeled data with an AE objective improves upon a model just trained on labeled data. Training with adversarial regularization provides further gains.

\vspace{-2mm}
\section{Discussion}
\paragraph{Impact of Regularization on Discrete Encoding}
\label{subsec:smoothness}

We further examine the impact of adversarial regularization on the
encoded representation produced by the model as it is trained.
Figure~\ref{fig:variance} (left), shows a sanity check that the
$\ell_2$ norm of encoder output $\boldz$ and prior samples $\tboldz$
converge quickly in ARAE training. The middle plot compares the trace
of the covariance matrix between these terms as training
progresses. It shows that variance of the encoder and the prior match
after several epochs.

\begin{figure}
  \small
   \begin{tabular}{lcc}
    \toprule
     $k$ & AE & ARAE  \\
     \midrule
     0 & 1.06 & 2.19 \\
     1 & 4.51 & 4.07 \\
     2 & 6.61 & 5.39 \\
     3 & 9.14 & 6.86 \\
     4 & 9.97 & 7.47 \\
     \bottomrule
  \end{tabular}\hspace*{0.2cm}
  \begin{tiny}
  \begin{tabular}{ll}
     \toprule
    Model & Samples \\
    \midrule
     Original & A woman wearing sunglasses  \\
     Noised & A woman sunglasses wearing   \\
    AE & A woman sunglasses wearing sunglasses \\
     ARAE & A woman wearing sunglasses  \\
     \midrule
     Original & Pets galloping down the street  \\
     Noised & Pets down the galloping street  \\
     AE & Pets riding the down galloping   \\
     ARAE & Pets congregate down the street near a ravine  \\
     \bottomrule
   \end{tabular}
  \end{tiny}

  \caption{\label{fig:manisent}
   Reconstruction error (negative log-likelihood averaged
   over sentences) of the original sentence from a corrupted
   sentence. Here $k$ is the number
   of swaps performed on the original sentence.}
       \vspace{-5mm}
\end{figure}
\vspace{-3mm}
\paragraph{Smoothness and Reconstruction}

We can also assess the ``smoothness'' of the encoder model learned
ARAE \citep{Rifai2011}. We start with a simple proxy that a smooth encoder
model should map similar sentences to similar $\mathbf{z}$ values. For
250 sentences, we calculate the average cosine similarity of 100
randomly-selected sentences within an edit-distance of at most 5 to
the original. The graph in Figure~\ref{fig:variance} (right) shows
that the cosine similarity of nearby sentences is quite high for ARAE
compared to a standard AE and increases in early rounds of training.
To further test this property, we feed noised discrete input to the encoder and
(i) calculate the score given to the original input, and (ii) compare
the resulting reconstructions.  Figure~\ref{fig:manisent} (right) shows
results for text where $k$ words are first permuted in each
sentence. We observe that ARAE is able to map a noised sentence to
a natural sentence (though not necessarily the denoised
sentence). Figure~\ref{fig:manisent} (left) shows empirical results for
these experiments. We obtain the reconstruction error (negative
log likelihood) of the original non-noised sentence under the
decoder, utilizing the noised code. We find that when $k = 0$ (i.e.
no swaps), the regular AE better reconstructs the exact input.
However, as the number of swaps pushes the input further
away, ARAE is more likely to produce the original sentence. (Note
that unlike denoising autoencoders which require a domain-specific
noising function \citep{Hill2016,Vincent2008}, the ARAE is not
explicitly trained to denoise an input.)

\vspace{-4mm}
\paragraph{Manipulation through the Prior}

An interesting property of latent variable models such as VAEs and
GANs is the ability to manipulate output samples through the prior.
In particular, for ARAE, the Gaussian form of the noise sample $\bolds$
induces the ability to smoothly interpolate between outputs by
exploiting the structure. While language models may provide a better
estimate of the underlying probability space, constructing this style
of interpolation would require combinatorial search, which makes this
a useful feature of latent variable text models. In Appendix~\ref{app:interpolations}
we show interpolations from for the text model, while Figure~\ref{fig:mnist} (bottom)
shows the interpolations for discretized MNIST ARAE.

A related property of GANs is the ability to move in
the latent space via offset vectors.\footnote{Similar to the case with word
vectors \citep{Mikolov2013}, \citet{Radford2016} observe that when the mean latent vector for
``men with glasses'' is subtracted from the mean latent vector for
``men without glasses'' and applied to an image of a ``woman without
glasses'', the resulting image is that of a ``woman with glasses''.}
To experiment with this property we generate sentences from
the ARAE and compute vector transforms in this space to attempt
to change main verbs, subjects and modifier (details in Appendix \ref{app:arithmetic}).
Some examples of successful transformations are shown in Figure~\ref{tab:shift} (bottom).
Quantitative evaluation of the success of the vector transformations
is given in Figure~\ref{tab:shift} (top).
\begin{figure}[t]
  \centering
  \tiny
    \begin{tabular}{lcc}
    \toprule
    Transform & Match \% & Prec\\
    \midrule
    walking& 85 & 79.5 \\
    man & 92 & 80.2\\
    two & 86 & 74.1\\
    dog & 88& 77.0\\
    standing& 89& 79.3\\
    several& 70& 67.0 \\
    \bottomrule
  \end{tabular}\\
\vspace{3mm}
  \begin{tabular}{ll}
  \toprule
    A man in a tie is sleeping and clapping on balloons .& $\Rightarrow_{\text{walking}}$ \\
    A man in a tie is clapping and \textcolor{red}{walking} dogs .\\ \midrule
    The jewish boy is trying to stay out of his skateboard . & $\Rightarrow_{\text{man}}$ \\
    The jewish \textcolor{red}{man} is trying to stay out of his horse . \\ \midrule
    Some child head a playing plastic with drink . & $\Rightarrow_{\text{Two}}$ \\
    \textcolor{red}{Two} children playing a head with plastic drink .\\ \midrule
    The people shine or looks into an area . & $\Rightarrow_{\text{dog}}$ \\
    The \textcolor{red}{dog} arrives or looks into an area .\\ \midrule
    A women are walking outside near a man . & $\Rightarrow_{\text{standing}}$ \\
    Three women are \textcolor{red}{standing} near a man walking .\\ \midrule
    A side child listening to a piece with steps playing on a table . & $\Rightarrow_{\text{Several}}$ \\
    \textcolor{red}{Several} child playing a guitar on side with a table .\\ 
    \bottomrule
  \end{tabular}
  \caption{Top: Quantitative evaluation of transformations. Match \% refers to the \% of samples where at least one decoder samples (per 100) had the desired
  transformation in the output, while Prec. measures the average precision of the output against the original sentence. Bottom: Examples where the offset vectors produced successful transformations of the original sentence. See Appendix \ref{app:arithmetic} for the full methodology.}
  \label{tab:shift}
\vspace{-4mm}
\end{figure}
\section{Related Work}
\vspace{-1mm}
While ideally autoencoders would learn latent spaces which compactly capture useful features that explain the observed data, in practice they often learn a degenerate \textit{identity} mapping where the latent code space is free of any structure, necessitating the need for some regularization on the latent space.
A popular approach is to regularize through an explicit prior on the code space and use
a variational approximation to the posterior, leading to a family of models called variational autoencoders (VAE) \citep{Kingma2014,Rezende2014}.  Unfortunately VAEs for discrete text sequences can be
challenging to train---for example, if the training procedure is not
carefully tuned with techniques like word dropout and KL annealing
\citep{bowman2015generating}, the decoder simply becomes a language model and
ignores the latent code. However there have been some recent successes through employing
convolutional decoders \cite{Yang2017,Semeniuta2017}, training the latent representation as a topic model \cite{Dieng2017,Wang2018}, using the von Mises--Fisher distribution \cite{Guu2017}, and  combining VAE with iterative inference \cite{Kim2018}. There has also been some work on making
the prior more flexible through explicit parameterization
\citep{Chen2017,Tomczak2017}. A notable technique is
adversarial autoencoders (AAE) \citep{Makhzani2015} which attempt to
imbue the model with a more flexible prior implicitly through
adversarial training. Recent work on Wasserstein autoencoders \cite{tolstikhin2017wasserstein} provides a theoretical foundation for the AAE and shows that AAE minimizes the Wasserstein distance between the data/model distributions.

The success of GANs on  images have led many researchers to consider applying GANs to discrete
data such as text. Policy gradient methods are a
natural way to deal with the resulting non-differentiable generator objective
when training directly in discrete space \citep{Glynn1987,Williams1992}. When trained on text data however,
such methods often require pre-training/co-training with a maximum
likelihood (i.e. language modeling) objective \citep{Che2017,Yu2017,Li2017}.
Another direction of work has been
through reparameterizing the categorical distribution with the
Gumbel-Softmax trick \citep{Jang2017,Maddison2017}---while initial
experiments were encouraging on a synthetic task \citep{Kusner2017},
scaling them to work on natural language is a challenging open problem. There have also been recent related
approaches that work directly with the
soft outputs from a generator \citep{Gulrajani2017,Rajeswar2017,Shen2017,Press2017}.
For example, \citet{Shen2017} exploits adversarial loss for
unaligned style transfer between text by
having the discriminator act on the RNN hidden states and using the soft outputs at each step as input to an RNN generator. 
Our approach instead works entirely in fixed-dimensional continuous space and does not require utilizing RNN hidden states directly. It is therefore also different from methods that discriminate in the joint latent/data space, such as ALI \cite{Dumoulin2017} and BiGAN \cite{Donahue2017}. Finally, our work adds to the
 recent line of work on unaligned style transfer for text \cite{Hu2017,Mueller2017,Li2018,Prabhumoye2018,Yang2018}.
\vspace{-3mm}
\section{Conclusion}
\vspace{-1mm}

We present adversarially regularized autoencoders (ARAE) as a simple approach for training
a discrete structure autoencoder jointly with a code-space generative adversarial network.
Utilizing the Wasserstein autoencoder framework \cite{tolstikhin2017wasserstein}, we also interpret ARAE as learning a latent variable model that minimizes an upper bound on the total variation distance between the data/model distributions.
We find that the model learns an improved autoencoder and exhibits a smooth latent space, as demonstrated by semi-supervised experiments, improvements on text style transfer, and manipulations in the latent space.

We note that
(as has been frequently observed when training GANs) the proposed model seemed to be quite sensitive to hyperparameters, and that we only tested our model on simple structures such as binarized digits and short sentences. \citet{Cifka2018} recently
evaluated a suite of sentence generation models and found that models are quite sensitive to their training setup, and that different models do well on different metrics.
Training deep latent variable models that can robustly model complex discrete structures (e.g. documents) remains an important open issue in the field.

\section*{Acknowledgements}
We thank Sam Wiseman, Kyunghyun Cho, Sam Bowman, Joan Bruna, Yacine Jernite, Martín
Arjovsky, Mikael Henaff, and Michael Mathieu for fruitful discussions. We are particularly grateful to Tianxiao Shen
for providing the results for style transfer.
We also thank the NVIDIA Corporation for the donation of a Titan X
Pascal GPU that was used for this research. Yoon Kim was supported by a gift from Amazon AWS Machine Learning Research. 
{
\footnotesize
\bibliography{icml}}

\begin{thebibliography}{53}
\providecommand{\natexlab}[1]{#1}
\providecommand{\url}[1]{\texttt{#1}}
\expandafter\ifx\csname urlstyle\endcsname\relax
  \providecommand{\doi}[1]{doi: #1}\else
  \providecommand{\doi}{doi: \begingroup \urlstyle{rm}\Url}\fi

\bibitem[Arjovsky \& Bottou(2017)Arjovsky and Bottou]{Arjovsky2017}
Arjovsky, M. and Bottou, L.
\newblock {T}owards {P}rincipled {M}ethods for {T}raining {G}enerative
  {A}dversarial {N}etworks.
\newblock In \emph{Proceedings of ICML}, 2017.

\bibitem[Arjovsky et~al.(2017)Arjovsky, Chintala, and
  Bottou]{arjovsky2017wasserstein}
Arjovsky, M., Chintala, S., and Bottou, L.
\newblock Wasserstein {GAN}.
\newblock In \emph{Proceedings of ICML}, 2017.

\bibitem[Bowman et~al.(2015)Bowman, Angeli, Potts, and Manning.]{Bowman2015}
Bowman, S.~R., Angeli, G., Potts, C., and Manning., C.~D.
\newblock A large annotated corpus for learning natural language inference.
\newblock In \emph{Proceedings of EMNLP}, 2015.

\bibitem[Bowman et~al.(2016)Bowman, Vilnis, Vinyals, Dai, Jozefowicz, and
  Bengio]{bowman2015generating}
Bowman, S.~R., Vilnis, L., Vinyals, O., Dai, A.~M., Jozefowicz, R., and Bengio,
  S.
\newblock Generating {S}entences from a {C}ontinuous {S}pace.
\newblock 2016.

\bibitem[Che et~al.(2017)Che, Li, Zhang, Hjelm, Li, Song, and Bengio]{Che2017}
Che, T., Li, Y., Zhang, R., Hjelm, R.~D., Li, W., Song, Y., and Bengio, Y.
\newblock {M}aximum-{L}ikelihood {A}ugment {D}iscrete {G}enerative
  {A}dversarial {N}etworks.
\newblock \emph{arXiv:1702.07983}, 2017.

\bibitem[Chen et~al.(2017)Chen, Kingma, Salimans, Duan, Dhariwal, Schulman,
  Sutskever, and Abbeel]{Chen2017}
Chen, X., Kingma, D.~P., Salimans, T., Duan, Y., Dhariwal, P., Schulman, J.,
  Sutskever, I., and Abbeel, P.
\newblock {V}ariational {L}ossy {A}utoencoder.
\newblock In \emph{Proceedings of ICLR}, 2017.

\bibitem[C\'{i}fka et~al.(2018)C\'{i}fka, Severyn, Alfonseca, and
  Filippova]{Cifka2018}
C\'{i}fka, O., Severyn, A., Alfonseca, E., and Filippova, K.
\newblock {E}val all, trust a few, do wrong to none: {C}omparing sentence
  generation models.
\newblock \emph{arXiv:1804.07972}, 2018.

\bibitem[Dai \& Le(2015)Dai and Le]{dai2015semi}
Dai, A.~M. and Le, Q.~V.
\newblock Semi-supervised sequence learning.
\newblock In \emph{Proceedings of NIPS}, 2015.

\bibitem[Denton \& Birodkar(2017)Denton and Birodkar]{denton2017unsupervised}
Denton, E. and Birodkar, V.
\newblock Unsupervised learning of disentangled representations from video.
\newblock In \emph{Proceedings of NIPS}, 2017.

\bibitem[Dieng et~al.(2017)Dieng, Wang, Gao, , and Paisley]{Dieng2017}
Dieng, A.~B., Wang, C., Gao, J., , and Paisley, J.
\newblock {T}opic{RNN}: {A} {R}ecurrent {N}eural {N}etwork {W}ith
  {L}ong-{R}ange {S}emantic {D}ependency.
\newblock In \emph{Proceedings of ICLR}, 2017.

\bibitem[Donahue et~al.(2017)Donahue, Krahenbühl, and Darrell]{Donahue2017}
Donahue, J., Krahenbühl, P., and Darrell, T.
\newblock {A}dversarial {F}eature {L}earning.
\newblock In \emph{Proceedings of ICLR}, 2017.

\bibitem[Glynn(1987)]{Glynn1987}
Glynn, P.
\newblock {L}ikelihood {R}atio {G}radient {E}stimation: {A}n {O}verview.
\newblock In \emph{Proceedings of Winter Simulation Conference}, 1987.

\bibitem[Goodfellow et~al.(2014)Goodfellow, Pouget-Abadie, Mirza, Xu,
  Warde-Farley, Ozair, Courville, and Bengio]{goodfellow2014generative}
Goodfellow, I., Pouget-Abadie, J., Mirza, M., Xu, B., Warde-Farley, D., Ozair,
  S., Courville, A., and Bengio, Y.
\newblock Generative adversarial nets.
\newblock In \emph{Proceedings of NIPS}, 2014.

\bibitem[Gozlan \& L\'{e}onard(2010)Gozlan and L\'{e}onard]{Gozlan2010}
Gozlan, N. and L\'{e}onard, C.
\newblock {T}ransport {I}nequalities. {A} {S}urvey.
\newblock \emph{arXiv:1003.3852}, 2010.

\bibitem[Gulrajani et~al.(2017)Gulrajani, Ahmed, Arjovsky, and
  Vincent~Dumoulin]{Gulrajani2017}
Gulrajani, I., Ahmed, F., Arjovsky, M., and Vincent~Dumoulin, A.~C.
\newblock {I}mproved {T}raining of {W}asserstein {GAN}s.
\newblock In \emph{Proceedings of NIPS}, 2017.

\bibitem[Guu et~al.(2017)Guu, Hashimoto, Oren, and Liang]{Guu2017}
Guu, K., Hashimoto, T.~B., Oren, Y., and Liang, P.
\newblock {G}enerating {S}entences by {E}diting {P}rototypes.
\newblock \emph{arXiv:1709.08878}, 2017.

\bibitem[Hill et~al.(2016)Hill, Cho, and Korhonen]{Hill2016}
Hill, F., Cho, K., and Korhonen, A.
\newblock {L}earning distributed representations of sentences from unlabelled
  data.
\newblock In \emph{Proceedings of NAACL}, 2016.

\bibitem[Hjelm et~al.(2018)Hjelm, Jacob, Che, Cho, and Bengio]{Hjelm2018}
Hjelm, R.~D., Jacob, A.~P., Che, T., Cho, K., and Bengio, Y.
\newblock {B}oundary-{S}eeking {G}enerative {A}dversarial {N}etworks.
\newblock In \emph{Proceedings of ICLR}, 2018.

\bibitem[Hu et~al.(2017)Hu, Yang, Liang, Salakhutdinov, and Xing]{Hu2017}
Hu, Z., Yang, Z., Liang, X., Salakhutdinov, R., and Xing, E.~P.
\newblock {C}ontrollable {T}ext {G}eneration.
\newblock In \emph{Proceedings of ICML}, 2017.

\bibitem[Hu et~al.(2018)Hu, Yang, Salakhutdinov, and Xing]{Hu2018}
Hu, Z., Yang, Z., Salakhutdinov, R., and Xing, E.~P.
\newblock On {U}nifying {D}eep {G}enerative {M}odels.
\newblock In \emph{Proceedings of ICLR}, 2018.

\bibitem[Jang et~al.(2017)Jang, Gu, and Poole]{Jang2017}
Jang, E., Gu, S., and Poole, B.
\newblock {C}ategorical {R}eparameterization with {G}umbel-{S}oftmax.
\newblock In \emph{Proceedings of ICLR}, 2017.

\bibitem[Joulin et~al.(2017)Joulin, Grave, Bojanowski, and
  Mikolov]{joulin2016bag}
Joulin, A., Grave, E., Bojanowski, P., and Mikolov, T.
\newblock Bag of {T}ricks for {E}fficient {T}ext {C}lassification.
\newblock In \emph{Proceedings of ACL}, 2017.

\bibitem[Kim et~al.(2018)Kim, Wiseman, Miller, Sontag, and Rush]{Kim2018}
Kim, Y., Wiseman, S., Miller, A.~C., Sontag, D., and Rush, A.~M.
\newblock {S}emi-{A}mortized {V}ariational {A}utoencoders.
\newblock In \emph{Proceedings of ICML}, 2018.

\bibitem[Kingma \& Welling(2014)Kingma and Welling]{Kingma2014}
Kingma, D.~P. and Welling, M.
\newblock {A}uto-{E}ncoding {V}ariational {B}ayes.
\newblock In \emph{Proceedings of ICLR}, 2014.

\bibitem[Kusner \& Hernandez-Lobato(2016)Kusner and
  Hernandez-Lobato]{Kusner2017}
Kusner, M. and Hernandez-Lobato, J.~M.
\newblock {GAN}s for {S}equences of {D}iscrete {E}lements with the
  {G}umbel-{S}oftmax {D}istribution.
\newblock \emph{arXiv:1611.04051}, 2016.

\bibitem[Lample et~al.(2017)Lample, Zeghidour, Usuniera, Bordes, Denoyer, and
  Ranzato]{Lample2017}
Lample, G., Zeghidour, N., Usuniera, N., Bordes, A., Denoyer, L., and Ranzato,
  M.
\newblock Fader networks: Manipulating images by sliding attributes.
\newblock In \emph{Proceedings of NIPS}, 2017.

\bibitem[Li et~al.(2017)Li, Monroe, Shi, Jean, Ritter, and Jurafsky]{Li2017}
Li, J., Monroe, W., Shi, T., Jean, S., Ritter, A., and Jurafsky, D.
\newblock {A}dversarial {L}earning for {N}eural {D}ialogue {G}eneration.
\newblock In \emph{Proceedings of EMNLP}, 2017.

\bibitem[Li et~al.(2018)Li, Jia, He, and Liang]{Li2018}
Li, J., Jia, R., He, H., and Liang, P.
\newblock {D}elete, {R}etrieve, {G}enerate: {A} {S}imple {A}pproach to
  {S}entiment and {S}tyle {T}ransfer.
\newblock In \emph{Proceedings of NAACL}, 2018.

\bibitem[Maddison et~al.(2017)Maddison, Mnih, and Teh]{Maddison2017}
Maddison, C.~J., Mnih, A., and Teh, Y.~W.
\newblock {T}he {C}oncrete {D}istribution: {A} {C}ontinuous {R}elaxation of
  {D}iscrete {R}andom {V}ariables.
\newblock In \emph{Proceedings of ICLR}, 2017.

\bibitem[Makhzani et~al.(2015)Makhzani, Shlens, Jaitly, Goodfellow, and
  Frey]{Makhzani2015}
Makhzani, A., Shlens, J., Jaitly, N., Goodfellow, I., and Frey, B.
\newblock {A}dversarial {A}utoencoders.
\newblock \emph{arXiv:1511.05644}, 2015.

\bibitem[Mikolov et~al.(2013)Mikolov, tau Yih, and Zweig]{Mikolov2013}
Mikolov, T., tau Yih, S.~W., and Zweig, G.
\newblock {L}inguistic {R}egularities in {C}ontinuous {S}pace {W}ord
  {R}epresentations.
\newblock In \emph{Proceedings of NAACL}, 2013.

\bibitem[Miyato et~al.(2018)Miyato, Kataoka, Koyama, and Yoshida]{Miyato2018}
Miyato, T., Kataoka, T., Koyama, M., and Yoshida, Y.
\newblock {S}pectral {N}ormalization {F}or {G}enerative {A}dversarial
  {N}etworks.
\newblock In \emph{Proceedings of ICLR}, 2018.

\bibitem[Mueller et~al.(2017)Mueller, Gifford, and Jaakkola]{Mueller2017}
Mueller, J., Gifford, D., and Jaakkola, T.
\newblock {S}equence to {B}etter {S}equence: {C}ontinuous {R}evision of
  {C}ombinatorial {S}tructures.
\newblock In \emph{Proceedings of ICML}, 2017.

\bibitem[Prabhumoye et~al.(2018)Prabhumoye, Tsvetkov, Salakhutdinov, and
  Black]{Prabhumoye2018}
Prabhumoye, S., Tsvetkov, Y., Salakhutdinov, R., and Black, A.~W.
\newblock {S}tyle {T}ransfer {T}hrough {B}ack-{T}ranslation.
\newblock In \emph{Proceedings of ACL}, 2018.

\bibitem[Press et~al.(2017)Press, Bar, Bogin, Berant, and Wolf]{Press2017}
Press, O., Bar, A., Bogin, B., Berant, J., and Wolf, L.
\newblock {L}anguage {G}eneration with {R}ecurrent {G}enerative {A}dversarial
  {N}etworks without {P}re-training.
\newblock \emph{arXiv:1706.01399}, 2017.

\bibitem[Radford et~al.(2016)Radford, Metz, and Chintala]{Radford2016}
Radford, A., Metz, L., and Chintala, S.
\newblock {U}nsupervised {R}epresentation {L}earning with {D}eep
  {C}onvolutional {G}enerative {A}dversarial {N}etworks.
\newblock In \emph{Proceedings of ICLR}, 2016.

\bibitem[Rajeswar et~al.(2017)Rajeswar, Subramanian, Dutil, Pal, and
  Courville]{Rajeswar2017}
Rajeswar, S., Subramanian, S., Dutil, F., Pal, C., and Courville, A.
\newblock {A}dversarial {G}eneration of {N}atural {L}anguage.
\newblock \emph{arXiv:1705.10929}, 2017.

\bibitem[Rezende et~al.(2014)Rezende, Mohamed, and Wierstra]{Rezende2014}
Rezende, D.~J., Mohamed, S., and Wierstra, D.
\newblock {S}tochastic {B}ackpropagation and {A}pproximate {I}nference in
  {D}eep {G}enerative {M}odels.
\newblock In \emph{Proceedings of ICML}, 2014.

\bibitem[Rifai et~al.(2011)Rifai, Vincent, Muller, Glorot, and
  Bengio]{Rifai2011}
Rifai, S., Vincent, P., Muller, X., Glorot, X., and Bengio, Y.
\newblock {C}ontractive {A}uto-{E}ncoders: {E}xplicit {I}nvariance {D}uring
  {F}eature {E}xtraction.
\newblock In \emph{Proceedings of ICML}, 2011.

\bibitem[Semeniuta et~al.(2017)Semeniuta, Severyn, and Barth]{Semeniuta2017}
Semeniuta, S., Severyn, A., and Barth, E.
\newblock {A} {H}ybrid {C}onvolutional {V}ariational {A}utoencoder for {T}ext
  {G}eneration.
\newblock In \emph{Proceedings of EMNLP}, 2017.

\bibitem[Shen et~al.(2017)Shen, Lei, Barzilay, and Jaakkola]{Shen2017}
Shen, T., Lei, T., Barzilay, R., and Jaakkola, T.
\newblock {S}tyle {T}ransfer from {N}on-{P}arallel {T}ext by
  {C}ross-{A}lignment.
\newblock In \emph{Proceedings of NIPS}, 2017.

\bibitem[Theis et~al.(2016)Theis, van~den Oord, and Bethge]{Theis2016}
Theis, L., van~den Oord, A., and Bethge, M.
\newblock A note on the evaluation of generative models.
\newblock In \emph{Proceedings of ICLR}, 2016.

\bibitem[Tolstikhin et~al.(2018)Tolstikhin, Bousquet, Gelly, and
  Schoelkopf]{tolstikhin2017wasserstein}
Tolstikhin, I., Bousquet, O., Gelly, S., and Schoelkopf, B.
\newblock {W}asserstein {A}uto-{E}ncoders.
\newblock In \emph{Proceedings of ICLR}, 2018.

\bibitem[Tomczak \& Welling(2018)Tomczak and Welling]{Tomczak2017}
Tomczak, J.~M. and Welling, M.
\newblock {VAE} with a {V}amp{P}rior.
\newblock In \emph{Proceedings of AISTATS}, 2018.

\bibitem[Villani(2008)]{villani2008optimal}
Villani, C.
\newblock \emph{Optimal transport: old and new}, volume 338.
\newblock Springer Science \& Business Media, 2008.

\bibitem[Vincent et~al.(2008)Vincent, Larochelle, Bengio, and
  Manzagol]{Vincent2008}
Vincent, P., Larochelle, H., Bengio, Y., and Manzagol, P.-A.
\newblock {E}xtracting and {C}omposing {R}obust {F}eatures with {D}enoising
  {A}utoencoders.
\newblock In \emph{Proceedings of ICML}, 2008.

\bibitem[Vincent~Dumoulin(2017)]{Dumoulin2017}
Vincent~Dumoulin, Ishmael~Belghazi, B. P. O. M. A. L. M. A. A.~C.
\newblock {A}dversarially {L}earned {I}nference.
\newblock In \emph{Proceedings of ICLR}, 2017.

\bibitem[Wang et~al.(2018)Wang, Gan, Wang, Shen, Huang, Ping, Satheesh, and
  Carin]{Wang2018}
Wang, W., Gan, Z., Wang, W., Shen, D., Huang, J., Ping, W., Satheesh, S., and
  Carin, L.
\newblock Topic {C}ompositional {N}eural {L}anguage {M}odel.
\newblock In \emph{Proceedings of AISTATS}, 2018.

\bibitem[Williams(1992)]{Williams1992}
Williams, R.~J.
\newblock {S}imple {S}tatistical {G}radient-following {A}lgorithms for
  {C}onnectionist {R}einforcement {L}earning.
\newblock \emph{Machine Learning}, 8, 1992.

\bibitem[Yang et~al.(2017)Yang, Hu, Salakhutdinov, and
  Berg-Kirkpatrick]{Yang2017}
Yang, Z., Hu, Z., Salakhutdinov, R., and Berg-Kirkpatrick, T.
\newblock {I}mproved {V}ariational {A}utoencoders for {T}ext {M}odeling using
  {D}ilated {C}onvolutions.
\newblock In \emph{Proceedings of ICML}, 2017.

\bibitem[Yang et~al.(2018)Yang, Hu, Dyer, Xing, and Berg-Kirkpatrick]{Yang2018}
Yang, Z., Hu, Z., Dyer, C., Xing, E.~P., and Berg-Kirkpatrick, T.
\newblock {U}nsupervised {T}ext {S}tyle {T}ransfer using {L}anguage {M}odels as
  {D}iscriminators.
\newblock \emph{arXiv:1805.11749}, 2018.

\bibitem[Yu et~al.(2017)Yu, Zhang, Wang, and Yu]{Yu2017}
Yu, L., Zhang, W., Wang, J., and Yu, Y.
\newblock {S}eq{GAN}: {S}equence {G}enerative {A}dversarial {N}ets with
  {P}olicy {G}radient.
\newblock In \emph{Proceedings of AAAI}, 2017.

\bibitem[Zhang et~al.(2015)Zhang, Zhao, and LeCun]{zhang2015character}
Zhang, X., Zhao, J., and LeCun, Y.
\newblock Character-level {C}onvolutional {N}etworks for {T}ext
  {C}lassification.
\newblock In \emph{Proceedings of NIPS}, 2015.

\end{thebibliography}
\bibliographystyle{icml2018}

\clearpage
\appendix
\section{Proof of Corollary 1}
\label{sec:proof}
\begin{corollary*}[Discrete case] Suppose $\boldx \in \mathcal{X}$ where $\mathcal{X}$ is the set of all one-hot vectors of length $n$, and let $f_\psi:\mathcal{Z} \rightarrow \Delta^{n-1}$ be a deterministic function that goes from the latent space $\mathcal{Z}$ to the $n-1$ dimensional simplex $\Delta^{n-1}$.
Further let $G_\psi: \mathcal{Z} \rightarrow \mathcal{X}$ be a deterministic function such that $G_\psi(\boldz)= \argmax_{\mathbf{w} \in \mathcal{X}}\mathbf{w}^\top f_\psi(\boldz)$, and as above let $\Prob_\psi(\boldx\ |\ \boldz)$ be the dirac distribution derived from $G_\psi$ such that $p_\psi(\boldx\ |\ \boldz) = \mathds{1}\{\boldx = G_\psi(\boldz) \}$.  Then the following is an upper bound on $\Vert \Prob_\psi - \Prob_\star \Vert_{\text{TV}}$, the total variation distance between $\Prob_\star$ and $\Prob_\psi$.
\[  \inf_{Q(\boldz\ |\ \boldx) : \Prob_Q = \Prob_\boldz} \mathbb{E}_{\Prob_\star}\mathbb{E}_{Q(\boldz\ |\ \boldx)} \Big[-\frac{2}{\log 2} \log \boldx^\top f_\psi(\boldz)\Big]  \]
\end{corollary*}

\begin{proof} Let our cost function be $c(\boldx,\boldy) = \mathds{1}\{\boldx \ne \boldy\}$.
We first note that for all $\boldx,\boldz$
\begin{align*}
\log 2 \mathds{1}\{ \mathbf{x} \ne \argmax_{\mathbf{w} \in \mathcal{X}} \mathbf{w}^\top f_\psi(\boldz)\}  <  -\log \mathbf{x}^\top f_\psi(\mathbf{z})
\end{align*}
This holds since if $\mathds{1}\{ \mathbf{x} \ne \argmax_{\mathbf{w} \in \mathcal{X}}\mathbf{w}^\top f_\psi(\boldz)\} = 1$, we have $\boldx^\top f_\psi(\boldz) < 0.5$, and $-\log \boldx^\top f_\psi(\boldz) > -\log 0.5 = \log 2 $. If on the other hand   $\mathbf{x} = \argmax_{\mathbf{w} \in \mathcal{X}}\mathbf{w}^\top f_\psi(\boldz)$, then the LHS is $0$ and RHS is always postive since $f_\psi(\boldz) \in \Delta^{n-1}$. Then,
\begin{align*}
& \inf_{Q : \Prob_Q = \Prob_\boldz} \mathbb{E}_{\Prob_\star}\mathbb{E}_{Q(\boldz\ |\ \boldx)} [-\frac{2}{\log 2}\log \boldx^\top f_\psi(\boldz)]   \\
> &\inf_{Q : \Prob_Q = \Prob_\boldz} \mathbb{E}_{\Prob_\star}\mathbb{E}_{Q(\boldz\ |\ \boldx)} [2\mathds{1}\{ \boldx \ne \argmax_{\mathbf{w} \in \mathcal{X}} \mathbf{w}^\top f_\psi(\boldz) \}] \\
= & 2 \inf_{Q : \Prob_Q = \Prob_\boldz}\mathbb{E}_{\Prob_\star}\mathbb{E}_{Q(\boldz\ |\ \boldx)} [\mathds{1}\{ \boldx \ne G_\psi(\boldz) \}]  \\
= & 2 \inf_{Q : \Prob_Q = \Prob_\boldz} \mathbb{E}_{\Prob_\star}\mathbb{E}_{Q(\boldz\ |\ \boldx)} [c(\boldx, G_\psi(\boldz))] \\
= & \,2W_c(\Prob_\star, \Prob_\psi)\\
= &  \, \Vert \Prob_\star - \Prob_\psi \Vert_{\text{TV}}
\end{align*}
The fifth line follows from Theorem 1, and the last equality uses the well-known correspondence between total variation distance and optimal transport with the indicator cost function \cite{Gozlan2010}.
\end{proof}

\vspace{-3mm}
\section{Optimality Property}
\vspace{-1mm}
\label{sec:app}
One can interpret the ARAE framework as a dual pathway network mapping two distinct distributions into a similar one; $\text{enc}_{\phi}$ and $g_\theta$ both output code vectors that are kept similar in terms of Wasserstein distance as measured by the critic.
We provide the following proposition showing that under our parameterization of the encoder and the generator, as the Wasserstein distance converges, the encoder distribution ($\Prob_Q$) converges to the generator distribution ($\Prob_z$), and further, their moments converge.

This is ideal since under our setting the generated distribution is simpler than the encoded distribution,
because the input to the generator is from a simple distribution (e.g. spherical Gaussian) and the generator possesses less capacity than the encoder.
However, it is not so simple that it is overly restrictive (e.g. as in VAEs).
Empirically we observe that the first and second moments do indeed converge as
training progresses (Section~\ref{subsec:smoothness}).

\begin{prop}
\label{prop:opt}
Let $\Prob$ be a distribution on a compact set $\rchi$, and $(\Prob_n)_{n \in N}$ be a sequence of distributions on $\rchi$. Further suppose that $ W(\Prob_n, \Prob) \to 0 $. Then the following statements hold:

\begin{enumerate}[label=(\roman*)]
\item $ \Prob_n \squi \Prob$ (i.e. convergence in distribution).
\item All moments converge, i.e. for all $k > 1, k \in \mathbb{N}$,
$$ \E_{X \sim \Prob_n} \Big[\prod_{i=1}^d X_i^{p_i}\Big]  \to \E_{X \sim \Prob}\Big[\prod_{i=1}^d X_i^{p_i}\Big] $$ for all $p_1, \dots, p_d$ such that
$\sum_{i=1}^d p_i = k$
\end{enumerate}
\end{prop}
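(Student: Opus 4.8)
The plan is to prove part (i) first and then obtain part (ii) as an essentially immediate corollary, since on a compact set every monomial $x \mapsto \prod_{i=1}^d x_i^{p_i}$ is a bounded continuous function, and convergence in distribution is precisely the statement that expectations of all bounded continuous test functions converge. So the real content lies in showing that $W(\Prob_n, \Prob) \to 0$ forces weak convergence $\Prob_n \squi \Prob$.

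For (i), my approach is to pass through the Kantorovich--Rubinstein dual representation of the Wasserstein-1 distance, $W(\Prob_n, \Prob) = \sup_{f} \big( \E_{X \sim \Prob_n}[f(X)] - \E_{X \sim \Prob}[f(X)] \big)$, where the supremum ranges over $1$-Lipschitz functions $f : \rchi \to \reals$ --- the same dual form already invoked in the WGAN background (Eq.~\ref{equ:wgan}). From this, for any fixed Lipschitz $f$ with constant $L$ we get $|\E_{\Prob_n}[f] - \E_{\Prob}[f]| \le L \cdot W(\Prob_n, \Prob) \to 0$, so expectations of all Lipschitz test functions converge.

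Next I would upgrade ``Lipschitz test functions'' to ``all continuous test functions'' using compactness of $\rchi$: the (bounded) Lipschitz functions are dense in $C(\rchi)$ under the sup norm. Given $g \in C(\rchi)$ and $\epsilon > 0$, I would pick a Lipschitz $f$ with $\|g - f\|_\infty < \epsilon$ and split $|\E_{\Prob_n}[g] - \E_{\Prob}[g]|$ into the two approximation errors (each at most $\epsilon$, since $\Prob_n$ and $\Prob$ are probability measures) plus $|\E_{\Prob_n}[f] - \E_{\Prob}[f]|$, which tends to $0$. Letting $n \to \infty$ and then $\epsilon \to 0$ gives $\E_{\Prob_n}[g] \to \E_{\Prob}[g]$ for every $g \in C(\rchi)$, which is exactly $\Prob_n \squi \Prob$ by the portmanteau characterization of weak convergence. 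This establishes (i).

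For (ii), I would simply observe that each monomial $m(x) = \prod_{i=1}^d x_i^{p_i}$ is a polynomial, hence continuous on $\rchi$, and since $\rchi$ is compact it is also bounded; thus $m \in C(\rchi)$, and applying (i) with $g = m$ yields the claimed moment convergence for every multi-index with $\sum_{i=1}^d p_i = k$. I expect the main obstacle to be the weak-convergence step of (i) --- specifically, justifying the density of Lipschitz functions in $C(\rchi)$ and keeping the duality constant $L$ finite --- rather than (ii), which compactness renders routine. Compactness is in fact used twice and is essential: it guarantees that monomials are bounded (so they are legitimate weak-convergence test functions), and it is what makes $W_1$-convergence equivalent to, rather than merely stronger than, weak convergence.
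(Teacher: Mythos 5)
Your proof is correct, and it differs from the paper's mainly in part (i). The paper does not prove (i) at all: it simply cites Theorem 6.9 of Villani's \emph{Optimal Transport: Old and New}, which states that the Wasserstein distance metrizes weak convergence. You instead give a self-contained argument via the Kantorovich--Rubinstein dual: $W_1$-convergence controls expectations of Lipschitz test functions, and on a compact metric space the Lipschitz functions are sup-norm dense in $C(\rchi)$ (e.g.\ by Stone--Weierstrass, or by the inf-convolution regularization $g_k(x) = \inf_y \{ g(y) + k\, d(x,y)\}$, which is $k$-Lipschitz and converges uniformly to $g$), so your three-term $\epsilon$-split upgrades convergence against Lipschitz functions to convergence against all continuous functions, which on a compact set is exactly weak convergence. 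This buys self-containedness at the cost of being restricted to the compact setting, whereas the cited Villani theorem holds on general Polish spaces. For part (ii) the two arguments essentially coincide: both apply the Portmanteau characterization to the monomial $x \mapsto \prod_{i=1}^d x_i^{p_i}$, which is continuous and bounded. The one difference is where boundedness comes from: you derive it from compactness of $\rchi$, which is the cleaner reading of the abstract statement, while the paper appeals to the model-specific facts that encoder codes are normalized to the unit sphere and generator codes lie in $(-1,1)^n$. A small caveat on your closing commentary: the implication $W(\Prob_n,\Prob) \to 0 \Rightarrow \Prob_n \squi \Prob$ does not actually require compactness (it holds on any Polish space); compactness is essential only for your particular density argument in (i) and for the boundedness of monomials in (ii).
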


\begin{proof}
(i) has been proved in \cite{villani2008optimal} Theorem 6.9.

For (ii), using {\it The Portmanteau Theorem}, (i) is equivalent to the following statement:

$\E_{X \sim \Prob_n}[f(X)] \to \E_{X \sim \Prob}[f(X)]$ for all bounded and continuous function $f$: $\reals^{d} \to \reals$, where $d$ is the dimension of the random variable.

 The $k$-th moment of a distribution is given by

$$ \E \Big[\prod_{i=1}^d X_i^{p_i}\Big] \text{ such that } \sum_{i=1}^d p_i = k$$

Our encoded code is bounded as we normalize the encoder output to lie on the unit sphere, and our generated code is also bounded to lie in $(-1,1)^n$ by the $\tanh$ function. Hence
$f(X) = \prod_{i=1}^d X_i^{q_i}$ is a bounded continuous function for all $q_i \ge 0$.
Therefore,

$$ \E_{X \sim \Prob_n} \Big[\prod_{i=1}^d X_i^{p_i}\Big]  \to \E_{X \sim \Prob}\Big[\prod_{i=1}^d X_i^{p_i}\Big] $$

where $\sum_{i=1}^d p_i = k$
\end{proof}

\section{Sample Generations}
In Figure~\ref{fig:text} we show some generated samples from the ARAE, AE, and a LM.
\label{app:generation}
\begin{figure}[h]
  \tiny
\begin{multicols}{3}
\centerline{ \textbf{ARAE Samples}}
A woman preparing three fish .\\
A woman is seeing a man in the river .\\
There passes a woman near birds in the air .\\
Some ten people is sitting through their office .\\
The man got stolen with young dinner bag .\\
Monks are running in court .\\
The Two boys in glasses are all girl .\\
The man is small sitting in two men that tell a children .\\
The two children are eating the balloon animal .\\
A woman is trying on a microscope .\\
The dogs are sleeping in bed .\\

\columnbreak
\centerline{ \textbf{AE Samples}}
Two Three woman in a cart tearing over of a tree . \\
A man is hugging and art . \\
The fancy skier is starting under the drag cup in . \\
A dog are <unk> a \\
A man is not standing . \\
The Boys in their swimming . \\
A surfer and a couple waiting for a show . \\
A couple is a kids at a barbecue . \\
The motorcycles is in the ocean loading \\
I 's bike is on empty \\
The actor was walking in a a small dog area .\\
no dog is young their mother \\

\columnbreak
\centerline{ \textbf{LM Samples}}
a man walking outside on a dirt road , sitting on the dock .\\
A large group of people is taking a photo for Christmas and at night .\\
Someone is avoiding a soccer game .\\
The man and woman are dressed for a movie .\\
Person in an empty stadium pointing at a mountain .\\
Two children and a little boy are <unk> a man in a blue shirt .\\
A boy rides a bicycle .\\
A girl is running another in the forest .\\
the man is an indian women .\\
 \end{multicols}
 \vspace{-4mm}
 \caption{\label{fig:text} Text samples generated from ARAE, a simple AE, and from a baseline LM trained on the same data. To generate from an AE we fit a multivariate Gaussian to the learned code space and generate code vectors from this Gaussian.}
 \vspace{-4mm}
\end{figure}

\section{Sentence Interpolations}
\label{app:interpolations}
In Figure~\ref{fig:interp} we show generations from interpolated latent vectors.  Specifically, we sample two points $\boldz_0$ and $\boldz_1$ from
$p(\boldz)$ and construct intermediary points
$\boldz_{\lambda} = \lambda \boldz_1 + (1-\lambda) \boldz_0 $. For
each we generate the argmax output $\tilde{\boldx}_{\lambda}$.
\begin{figure}
  \tiny

  \begin{multicols}{3}
\textcolor{blue}{A man is on the corner in a sport area . }\\
\textcolor{gray}{A} \textcolor{gray}{man} \textcolor{gray}{is} \textcolor{gray}{on} \textcolor{gray}{corner} \textcolor{gray}{in} \textcolor{gray}{a} road all \textcolor{gray}{.}\\
\textcolor{gray}{A} lady \textcolor{gray}{is} \textcolor{gray}{on} outside \textcolor{gray}{a} racetrack \textcolor{gray}{.}\\
\textcolor{gray}{A} \textcolor{gray}{lady} \textcolor{gray}{is} \textcolor{gray}{outside} \textcolor{gray}{on} \textcolor{gray}{a} \textcolor{gray}{racetrack} \textcolor{gray}{.}\\
\textcolor{gray}{A} lot of people \textcolor{gray}{is} outdoors in an urban setting \textcolor{gray}{.}\\
\textcolor{gray}{A} \textcolor{gray}{lot} \textcolor{gray}{of} \textcolor{gray}{people} \textcolor{gray}{is} \textcolor{gray}{outdoors} \textcolor{gray}{in} \textcolor{gray}{an} \textcolor{gray}{urban} \textcolor{gray}{setting}
\textcolor{gray}{.}\\
\textcolor{red}{A lot of people is outdoors in an urban setting .}

\columnbreak

\textcolor{blue}{A man is on a ship path with the woman . }\\
\textcolor{gray}{A} \textcolor{gray}{man} \textcolor{gray}{is} \textcolor{gray}{on} \textcolor{gray}{a} \textcolor{gray}{ship} \textcolor{gray}{path} \textcolor{gray}{with} \textcolor{gray}{the} \textcolor{gray}{woman} \textcolor{gray}{.}\\
\textcolor{gray}{A} \textcolor{gray}{man} \textcolor{gray}{is} passing \textcolor{gray}{on} \textcolor{gray}{a} bridge \textcolor{gray}{with} \textcolor{gray}{the} girl \textcolor{gray}{.}\\
\textcolor{gray}{A} \textcolor{gray}{man} \textcolor{gray}{is} \textcolor{gray}{passing} \textcolor{gray}{on} \textcolor{gray}{a} \textcolor{gray}{bridge} \textcolor{gray}{with} \textcolor{gray}{the} \textcolor{gray}{girl} \textcolor{gray}{.}\\
\textcolor{gray}{A} \textcolor{gray}{man} \textcolor{gray}{is} \textcolor{gray}{passing} \textcolor{gray}{on} \textcolor{gray}{a} \textcolor{gray}{bridge} \textcolor{gray}{with} \textcolor{gray}{the} \textcolor{gray}{girl} \textcolor{gray}{.}\\
\textcolor{gray}{A} \textcolor{gray}{man} \textcolor{gray}{is} \textcolor{gray}{passing} \textcolor{gray}{on} \textcolor{gray}{a} \textcolor{gray}{bridge} \textcolor{gray}{with} \textcolor{gray}{the} dogs \textcolor{gray}{.}\\
\textcolor{red}{A man is passing on a bridge with the dogs .}

\columnbreak

\textcolor{blue}{A man in a cave is used an escalator .}\\\\
\textcolor{gray}{A} \textcolor{gray}{man} \textcolor{gray}{in} \textcolor{gray}{a} \textcolor{gray}{cave} \textcolor{gray}{is} \textcolor{gray}{used} \textcolor{gray}{an} \textcolor{gray}{escalator} \\
\textcolor{gray}{A} \textcolor{gray}{man} \textcolor{gray}{in} \textcolor{gray}{a} \textcolor{gray}{cave} \textcolor{gray}{is} \textcolor{gray}{used} chairs \textcolor{gray}{.} \\
\textcolor{gray}{A} \textcolor{gray}{man} \textcolor{gray}{in} \textcolor{gray}{a} number \textcolor{gray}{is} \textcolor{gray}{used} many equipment  \\
\textcolor{gray}{A} \textcolor{gray}{man} \textcolor{gray}{in} \textcolor{gray}{a} \textcolor{gray}{number} \textcolor{gray}{is} posing so on \textcolor{gray}{a} big rock \textcolor{gray}{.} \\
People are \textcolor{gray}{posing} \textcolor{gray}{in} \textcolor{gray}{a} rural area \textcolor{gray}{.}\\
\textcolor{red}{People are posing in a rural area}.

\end{multicols}
\vspace{-4mm}
  \caption{  \label{fig:interp} Sample interpolations from the ARAE. Constructed by linearly
  interpolating in the latent space and decoding to the output space.
  Word changes are highlighted in black.
}
\vspace{-4mm}
\end{figure}

\section{Vector Arithmetic}
\label{app:arithmetic}
We generate 1 million sentences from the ARAE and parse the sentences to obtain the main verb, subject,
and modifier. Then for a given sentence, to change the main verb we
subtract the mean latent vector $(\mathbf{t})$ for all other sentences
with the same main verb (in the first example in
Figure~\ref{tab:shift} this would correspond to all sentences that had
``sleeping'' as the main verb) and add the mean latent vector for all
sentences that have the desired transformation (with the running
example this would be all sentences whose main verb was
``walking''). We do the same to transform the subject and the
modifier.  We decode back into sentence space with the transformed
latent vector via sampling from
$p_{\psi}(g(\mathbf{z} + \mathbf{t}))$. Some examples of successful
transformations are shown in Figure~\ref{tab:shift} (right).
Quantitative evaluation of the success of the vector transformations
is given in Figure~\ref{tab:shift} (left). For each original vector
$\mathbf{z}$ we sample 100 sentences from $p_{\psi}(g(\mathbf{z} + \mathbf{t}))$  over the
transformed new latent vector and consider it a match if \textit{any}
of the sentences demonstrate the desired transformation. Match \% is
proportion of original vectors that yield a match post
transformation. As we ideally want the generated samples to only
differ in the specified transformation, we also calculate the average
word precision against the original sentence (Prec) for any match.

\section{Experimental Details}
\label{app:exp}

\subsection*{MNIST experiments}
\label{subsec:hyperparameter}

\begin{itemize}
\item The encoder is a three-layer MLP, \texttt{784-800-400-100}. 
\item Additive Gaussian noise is injected into $\boldc$ then gradually decayed to $0$.
\item The decoder is a four-layer MLP, \texttt{100-400-800-1000-784}
\item The autoencoder is optimized by Adam, with learning rate \texttt{5e-04}.
\item An MLP generator \texttt{32-64-100-150-100}.
\item An MLP critic \texttt{100-100-60-20-1} with weight clipping $\epsilon = 0.05$. The critic is trained 10 iterations in every loop.
\item GAN is optimized by Adam, with learning rate \texttt{5e-04} on the generator and \texttt{5e-05} on the critic.
\item Weighing factor $\lambda^{(1)}=0.2$.
\end{itemize}

\subsection*{Text experiments}
\label{subsec:archtext}
\begin{itemize}
\item The encoder is an one-layer LSTM with 300 hidden units.
\item Additive Gaussian noise is injected into $\boldc$ then gradually decayed to $0$.
\item The decoder is an one-layer LSTM with 300 hidden units.
\item The LSTM state vector is augmented by the hidden code $c$ at every decoding time step, before forwarding into the output softmax layer.
\item The word embedding is of size 300.
\item The autoencoder is optimized by SGD with learning rate 1. A grad clipping on the autoencoder, with max \texttt{grad\_norm} set to 1.
\item An MLP generator \texttt{100-300-300}.
\item An MLP critic \texttt{300-300-1} with weight clipping $\epsilon = 0.01$. The critic is trained 5 iterations in every loop.
\item GAN is optimized by Adam, with learning rate \texttt{5e-05} on the generator, and \texttt{1e-05} on the critic.
\end{itemize}

\subsection*{Semi-supervised experiments}
\label{app:archsemi}
The following changes are made based on the SNLI experiments:
\begin{itemize}
\item Larger network to GAN components: an MLP generator \texttt{100-150-300-500} and an MLP critic \texttt{500-500-150-80-20-1} with weight clipping factor $\epsilon = 0.02$. 
\end{itemize}

\subsection*{Yelp/Yahoo transfer}
\begin{itemize}
\item An MLP style adversarial classifier \texttt{300-200-100}, trained by SGD learning rate $0.1$.
\item Weighing factor from both adversarial forces $\lambda_a^{(1)}=1$, $\lambda_b^{(1)}=10$.
\end{itemize}
\section{Style Transfer Samples}
In the following pages we show randomly sampled style transfers from the Yelp/Yahoo corpus.
\label{app:sheet}
\begin{figure*}[t]
  { \bf  Yelp Sentiment Transfer} \\
  \vspace{5mm}
  \tiny
  \centering
    \begin{tabular}{ll|ll}
    \toprule
    & Positive to Negative & & Negative to Positive \\
    \midrule
Original & great indoor mall . & Original & hell no ! \\
ARAE & no smoking mall . & ARAE & hell great ! \\
Cross-AE & terrible outdoor urine . & Cross-AE & incredible pork ! \\
\midrule
Original & great blooming onion . & Original & highly disappointed ! \\
ARAE & no receipt onion . & ARAE & highly recommended ! \\
Cross-AE & terrible of pie . & Cross-AE & highly clean ! \\
\midrule
Original & i really enjoyed getting my nails done by peter . & Original & bad products . \\
ARAE & i really needed getting my nails done by now . & ARAE & good products . \\
Cross-AE & i really really told my nails done with these things . & Cross-AE & good prices . \\
\midrule
Original & definitely a great choice for sushi in las vegas ! & Original & i was so very disappointed today at lunch . \\
ARAE & definitely a \_num\_ star rating for \_num\_ sushi in las vegas . & ARAE & i highly recommend this place today . \\
Cross-AE & not a great choice for breakfast in las vegas vegas ! & Cross-AE & i was so very pleased to this . \\
\midrule
Original & the best piece of meat i have ever had ! & Original & i have n't received any response to anything . \\
ARAE & the worst piece of meat i have ever been to ! & ARAE & i have n't received any problems to please . \\
Cross-AE & the worst part of that i have ever had had ! & Cross-AE & i have always the desert vet . \\
\midrule
Original & really good food , super casual and really friendly . & Original & all the fixes were minor and the bill ? \\
ARAE & really bad food , really generally really low and decent food . & ARAE & all the barbers were entertaining and the bill did n't disappoint . \\
Cross-AE & really good food , super horrible and not the price . & Cross-AE & all the flavors were especially and one ! \\
\midrule
Original & it has a great atmosphere , with wonderful service . & Original & small , smokey , dark and rude management . \\
ARAE & it has no taste , with a complete jerk . & ARAE & small , intimate , and cozy friendly staff . \\
Cross-AE & it has a great horrible food and run out service . & Cross-AE & great , , , chips and wine . \\
\midrule
Original & their menu is extensive , even have italian food . & Original & the restaurant did n't meet our standard though . \\
ARAE & their menu is limited , even if i have an option . & ARAE & the restaurant did n't disappoint our expectations though . \\
Cross-AE & their menu is decent , i have gotten italian food . & Cross-AE & the restaurant is always happy and knowledge . \\
\midrule
Original & everyone who works there is incredibly friendly as well . & Original & you could not see the stage at all ! \\
ARAE & everyone who works there is incredibly rude as well . & ARAE & you could see the difference at the counter ! \\
Cross-AE & everyone who works there is extremely clean and as well . & Cross-AE & you could definitely get the fuss ! \\
\midrule
Original & there are a couple decent places to drink and eat in here as well . & Original & room is void of all personality , no pictures or any sort of decorations . \\
ARAE & there are a couple slices of options and \_num\_ wings in the place . & ARAE & room is eclectic , lots of flavor and all of the best . \\
Cross-AE & there are a few night places to eat the car here are a crowd . & Cross-AE & it 's a nice that amazing , that one 's some of flavor . \\
\midrule
Original & if you 're in the mood to be adventurous , this is your place ! & Original & waited in line to see how long a wait would be for three people . \\
ARAE & if you 're in the mood to be disappointed , this is not the place . & ARAE & waited in line for a long wait and totally worth it . \\
Cross-AE & if you 're in the drive to the work , this is my place ! & Cross-AE & another great job to see and a lot going to be from dinner . \\
\midrule
Original & we came on the recommendation of a bell boy and the food was amazing . & Original & the people who ordered off the menu did n't seem to do much better . \\
Cross-AE & we came on the recommendation and the food was a joke . & ARAE & the people who work there are super friendly and the menu is good . \\
Cross-AE & we went on the car of the time and the chicken was awful . & Cross-AE & the place , one of the office is always worth you do a business . \\
\midrule
Original & service is good but not quick , just enjoy the wine and your company . & Original & they told us in the beginning to make sure they do n't eat anything . \\
ARAE & service is good but not quick , but the service is horrible . & ARAE & they told us in the mood to make sure they do great food . \\
Cross-AE & service is good , and horrible , is the same and worst time ever . & Cross-AE & they 're us in the next for us as you do n't eat . \\
\midrule
Original & the steak was really juicy with my side of salsa to balance the flavor . & Original & the person who was teaching me how to control my horse was pretty rude . \\
ARAE & the steak was really bland with the sauce and mashed potatoes . & ARAE & the person who was able to give me a pretty good price . \\
Cross-AE & the fish was so much , the most of sauce had got the flavor . & Cross-AE & the owner 's was gorgeous when i had a table and was friendly . \\
\midrule
Original & other than that one hell hole of a star bucks they 're all great ! & Original & he was cleaning the table next to us with gloves on and a rag . \\
ARAE & other than that one star rating the toilet they 're not allowed . & ARAE & he was prompt and patient with us and the staff is awesome . \\
Cross-AE & a wonder our one came in a \_num\_ months , you 're so better ! & Cross-AE & he was like the only thing to get some with with my hair . \\
    \bottomrule
  \end{tabular}
  \caption{ Full sheet of sentiment transfer result on the Yelp corpus.}
\end{figure*}

\newpage

\begin{figure*}[h]
{ \bf  Yahoo Topic Transfer on Questions}  \\
  \vspace{5mm}
  \tiny
  \centering
  \begin{tabular}{lp{0.25\textwidth}|lp{0.25\textwidth}|lp{0.25\textwidth}}
    \toprule
    & from Science & & from Music & & from Politics \\
    \midrule
Original & what is an event horizon with regards to black holes ? & Original & do you know a website that you can find people who want to join bands ? & Original & republicans : would you vote for a cheney / satan ticket in 2008 ? \\
Music & what is your favorite sitcom with adam sandler ? & Science & do you know a website that can help me with science ? & Science & guys : how would you solve this question ? \\
Politics & what is an event with black people ? & Politics & do you think that you can find a person who is in prison ? & Music & guys : would you rather be a good movie ? \\
\midrule
Original & what did john paul jones do in the american revolution ? & Original & do people who quote entire poems or song lyrics ever actually get chosen best answer ? & Original & if i move to the usa do i lose my pension in canada ? \\
Music & what did john lennon do in the new york family ? & Science & do you think that scientists learn about human anatomy and physiology of life ? & Science & if i move the <unk> in the air i have to do my math homework ? \\
Politics & what did john mccain do in the next election ? & Politics & do people who knows anything about the recent issue of <unk> leadership ? & Music & if i move to the music do you think i feel better ? \\
\midrule
Original & can anybody suggest a good topic for a statistical survey ? & Original & from big brother , what is the girls name who had <unk> in her apt ? & Original & what is your reflection on what will be our organizations in the future ? \\
Music & can anybody suggest a good site for a techno ? & Science & in big bang what is the <unk> of <unk> , what is the difference between <unk> and <unk> ? & Science & what is your opinion on what will be the future in our future ? \\
Politics & can anybody suggest a good topic for a student visa ? & Politics & is big brother in the <unk> what do you think of her ? & Music & what is your favorite music videos on the may i find ? \\
\midrule
Original & can a kidney infection effect a woman \&apos;s <unk> cycle ? & Original & where is the tickets for the filming of the suite life of zack and cody ? & Original & wouldn \&apos;t it be fun if we the people veto or passed bills ? \\
Music & can anyone give me a good film <unk> ? & Science & where is the best place of the blood stream for the production of the cell ? & Science & isnt it possible to be cloned if we put the moon or it ? \\
Politics & can a landlord officer have a <unk> <unk> ? & Politics & where is the best place of the navy and the senate of the union ? & Music & isnt it possible or if we \&apos;re getting married ? \\
\midrule
Original & where does the term \&quot; sweating <unk> \&quot; come from ? & Original & the <unk> singers was a band in 1963 who had a hit called <unk> man ? & Original & can anyone tell me how i could go about interviewing north vietnamese soldiers ? \\
Music & where does the term \&quot; <unk> \&quot; come from ? & Science & the <unk> river in a <unk> was created by a <unk> who was born in the last century ? & Science & can anyone tell me how i could find how to build a robot ? \\
Politics & where does the term \&quot; <unk> \&quot; come from ? & Politics & the <unk> are <unk> in a <unk> who was shot an <unk> ? & Music & can anyone tell me how i could find out about my parents ? \\
\midrule
Original & what other <unk> sources are there than burning fossil fuels . & Original & what is the first metal band in the early 60 \&apos;s ..... ? ? ? ? & Original & if the us did not exist would the world be a better place ? \\
Music & what other <unk> are / who are the greatest guitarist currently on tv today ? & Science & what is the first country in the universe ? & Science & if the world did not exist , would it be possible ? \\
Politics & what other <unk> are there for veterans who lives ? & Politics & who is the first president in the usa ? ? ? ? ? ? ? ? ? ? ? ? ? ? ? ? ? ? ? ? ? ? ? ? & Music & if you could not have a thing who would it be ? \\
\midrule
    \bottomrule
  \end{tabular}
  \caption{\label{tab:yahoo_tit_trans} Full sheet of Yahoo topic transfer on titles.}
\end{figure*}

\newpage
\begin{figure*}[t]
{ \bf  Yahoo Topic Transfer on Answers} \\
  \vspace{5mm}
  \tiny
  \centering
  \begin{tabular}{lp{0.25\textwidth}|lp{0.25\textwidth}|lp{0.25\textwidth}}
    \toprule
    & from Science & & from Music & & from Politics \\
    \midrule
Original & take 1ml of hcl ( concentrated ) and dilute it to 50ml . & Original & all three are fabulous artists , with just incredible talent ! ! & Original & 4 years of an idiot in office + electing the idiot again = ? \\
Music & take em to you and shout it to me & Science & all three are genetically bonded with water , but just as many substances , are capable of producing a special case . & Science & 4 years of an idiot in the office of science ? \\
Politics & take bribes to islam and it will be punished . & Politics & all three are competing with the government , just as far as i can . & Music & 4 ) <unk> in an idiot , the idiot is the best of the two points ever ! \\
\midrule
Original & oils do not do this , they do not \&quot; set \&quot; . & Original & she , too , wondered about the underwear outside the clothes . & Original & send me \$ 100 and i \&apos;ll send you a copy - honest . \\
Music & cucumbers do not do this , they do not \&quot; do \&quot; . & Science & she , too , i know , the clothes outside the clothes . & Science & send me an email and i \&apos;ll send you a copy . \\
Politics & corporations do not do this , but they do not . & Politics & she , too , i think that the cops are the only thing about the outside of the u.s. . & Music & send me \$ 100 and i \&apos;ll send you a copy . \\
\midrule
Original & the average high temps in jan and feb are about 48 deg . & Original & i like rammstein and i don \&apos;t speak or understand german . & Original & wills can be <unk> , or typed and signed without needing an attorney . \\
Music & the average high school in seattle and is about 15 minutes . & Science & i like googling and i don \&apos;t understand or speak . & Science & euler can be <unk> , and without any type of operations , or <unk> . \\
Politics & the average high infantry division is in afghanistan and alaska . & Politics & i like mccain and i don \&apos;t care about it . & Music & madonna can be <unk> , and signed without opening or <unk> . \\
\midrule
Original & the light from you lamps would move away from you at light speed & Original & mark is great , but the guest hosts were cool too ! & Original & hungary : 20 january 1945 , ( formerly a member of the axis ) \\
Music & the light from you tube would move away from you & Science & mark is great , but the water will be too busy for the same reason . & Science & nh3 : 20 january , 78 ( a ) \\
Politics & the light from you could go away from your state & Politics & mark twain , but the great lakes , the united states of america is too busy . & Music & 1966 - 20 january 1961 ( a ) 1983 song \\
\midrule
Original & van <unk> , on the other hand , had some serious issues ... & Original & they all offer terrific information about the cast and characters , ... & Original & bulgaria : 8 september 1944 , ( formerly a member of the axis ) \\
Music & van <unk> on the other hand , had some serious issues . & Science & they all offer insight about the characteristics of the earth , and are composed of many stars . & Science & moreover , 8 \^ 3 + ( x + 7 ) ( x \^ 2 ) = ( a \^ 2 ) \\
Politics & van <unk> , on the other hand , had some serious issues . & Politics & they all offer legitimate information about the invasion of iraq and the u.s. , and all aspects of history . & Music & harrison : 8 september 1961 ( a ) ( 1995 ) \\
\midrule
Original & just multiply the numerator of one fraction by that of the other . & Original & but there are so many more i can \&apos;t think of ! & Original & anyone who doesnt have a billion dollars for all the publicity cant win . \\
Music & just multiply the fraction of the other one that \&apos;s just like it . & Science & but there are so many more of the number of questions . & Science & anyone who doesnt have a decent chance is the same for all the other . \\
Politics & just multiply the same fraction of other countries . & Politics & but there are so many more of the can i think of today . & Music & anyone who doesnt have a lot of the show for the publicity . \\
\midrule
Original & civil engineering is still an umbrella field comprised of many related specialties . & Original & i love zach he is sooo sweet in his own way ! & Original & the theory is that cats don \&apos;t take to being tied up but thats <unk> . \\
Music & civil rights is still an art union . & Science & the answer is he \&apos;s definitely in his own way ! & Science & the theory is that cats don \&apos;t grow up to <unk> . \\
Politics & civil law is still an issue . & Politics & i love letting he is sooo smart in his own way ! & Music & the theory is that dumb but don \&apos;t play <unk> to <unk> . \\
\midrule
Original & h2o2 ( hydrogen peroxide ) naturally decomposes to form o2 and water . & Original & remember the industry is very shady so keep your eyes open ! & Original & the fear they are trying to instill in the common man is based on what ? \\
Music & jackie and brad pitt both great albums and they are my fav . & Science & remember the amount of water is so very important . & Science & the fear they are trying to find the common ancestor in the world . \\
Politics & kennedy and blair hate america to invade them . & Politics & remember the amount of time the politicians are open your mind . & Music & the fear they are trying to find out what is wrong in the song . \\
\midrule
Original & the quieter it gets , the more white noise you can here . & Original & but can you fake it , for just one more show ? & Original & think about how much planning and people would have to be involved in what happened . \\
Music & the fray it gets , the more you can hear . & Science & but can you fake it , just for more than one ? & Science & think about how much time would you have to do . \\
Politics & the gop gets it , the more you can here . & Politics & but can you fake it for more than one ? & Music & think about how much money and what would be <unk> about in the world ? \\
\midrule
Original & h2co3 ( carbonic acid ) naturally decomposes to form water and co2 . & Original & i am going to introduce you to the internet movie database . & Original & this restricts the availability of cash to them and other countries too start banning them . \\
Music & phoebe and jack , he \&apos;s gorgeous and she loves to get him ! & Science & i am going to investigate the internet to google . & Science & this reduces the intake of the other molecules to produce them and thus are too large . \\
Politics & nixon ( captured ) he lied and voted for bush to cause his country . & Politics & i am going to skip the internet to get you checked . & Music & this is the cheapest package of them too . \\
\midrule
    \bottomrule
  \end{tabular}\hspace{1cm}
  \caption{\label{tab:yahoo_ans_trans} Full sheet of Yahoo topic transfer on answers.}
\end{figure*}

\end{document}